\newtheorem{proposition}{Proposition}[section]
\newtheorem{lemma}{Lemma}[section]
\theoremstyle{definition}
\newtheorem{assumption}{Assumption}[section]
\theoremstyle{remark}
\newcommand{\wass}{\mathcal{W}}
\newcommand{\pne}{\mathcal{P}^{\mathrm{OT},N}_\epsilon}
\newcommand{\rd}{\mathrm{d}}
\newcommand{\vertiii}[1]{{\left\vert\kern-0.15ex\left\vert\kern-0.15ex\left\vert #1 
    \right\vert\kern-0.15ex\right\vert\kern-0.15ex\right\vert}}
\newcommand*{\addFileDependency}[1]{
  \typeout{(#1)}
  \@addtofilelist{#1}
  \IfFileExists{#1}{}{\typeout{No file #1.}}
}
\icmltitlerunning{Differentiable Particle Filtering}
\begin{document}

\twocolumn[
\icmltitle{Differentiable Particle Filtering via Entropy-Regularized Optimal Transport}



\icmlsetsymbol{equal}{*}

\begin{icmlauthorlist}
\icmlauthor{Adrien Corenflos}{equal,aalto}
\icmlauthor{James Thornton}{equal,ox}
\icmlauthor{George Deligiannidis}{ox}
\icmlauthor{Arnaud Doucet}{ox}
\end{icmlauthorlist}

\icmlaffiliation{aalto}{Department of Electrical Engineering and Automation, Aalto University}
\icmlaffiliation{ox}{Department of Statistics, University of Oxford}

\icmlcorrespondingauthor{Adrien Corenflos}{adrien.corenflos@aalto.fi}
\icmlcorrespondingauthor{James Thornton}{james.thornton@spc.ox.ac.uk}
\icmlkeywords{Monte Carlo, Particle Filtering, State-Space Models, Optimal Transport}

\vskip 0.3in
]

\printAffiliationsAndNotice{\icmlEqualContribution} 

\begin{abstract}

Particle Filtering (PF) methods are an established class of procedures for performing inference in non-linear state-space models. Resampling is a key ingredient of PF, necessary to obtain low variance likelihood and states estimates. However, traditional resampling methods result in PF-based loss functions being non-differentiable with respect to model and PF parameters. In a variational inference context, resampling also yields high variance gradient estimates of the PF-based evidence lower bound. By leveraging optimal transport ideas, we introduce a principled differentiable particle filter and provide convergence results. We demonstrate this novel method on a variety of applications.
\end{abstract}

\section{Introduction}\label{sec:intro}
In this section we provide a brief introduction to state-space models (SSMs) and PF methods. We then illustrate one of the well-known limitations of PF \cite{kantas2015particle}: resampling steps are required in order to compute low-variance estimates, but these estimates are not differentiable w.r.t.\ to model and PF parameters. This hinders end-to-end training. We discuss recent approaches to address this problem in econometrics, statistics and machine learning (ML), outline their limitations and our contributions.  

    \subsection{State-Space Models} \label{sec:state_space_models}
            SSMs are an expressive class of sequential models, used in numerous scientific domains including econometrics, ecology, ML and robotics; see e.g. \cite{chopin2020introduction,douc2014nonlinear,doucet2018sequential,kitagawa1996smoothness,lindsten2013backward,thrun2005probabilistic}. SSM may be characterized by a latent $\mathcal{X}$-valued Markov process $(X_t)_{t\geq1}$ and  $\mathcal{Y}$-valued observations $(Y_t)_{t\geq1}$ satisfying $X_1\sim \mu_{\theta}(\cdot)$ and for $t\geq1$ 
            \begin{equation}\label{eq:SSMmmodel}
                X_{t+1}|\{ X_t=x \}\sim f_{\theta}(\cdot|x),~~Y_t|\{ X_t=x \}\sim g_{\theta}(\cdot|x),
            \end{equation}
            where $\theta \in \Theta$ is a parameter of interest. Given observations $(y_t)_{t\geq1}$ and parameter values $\theta$, one may perform state inference at time $t$ by computing the posterior of $X_t$ given $y_{1:t}\coloneqq(y_1,...,y_t)$ where
            \begin{align*}
                p_{\theta}(x_{t}|y_{1:t-1})&=\int f_{\theta}(x_t|x_{t-1}) p_{\theta}(x_{t-1}|y_{1:t-1})\textrm{d}x_{t-1},\\ p_{\theta}(x_t|y_{1:t})&=\frac{g_{\theta}(y_t|x_t)p_{\theta}(x_t|y_{1:t-1})}{\int g_{\theta}(y_t|x_t)p_{\theta}(x_t|y_{1:t-1})\textrm{d}x_t},
            \end{align*}
            with $p_{\theta}(x_1|y_0):=\mu_{\theta}(x_1)$. 
            
            The log-likelihood $\ell(\theta)=\log p_{\theta}(y_{1:T})$ is then given by
            \begin{equation*}
                \ell(\theta)=\sum_{t=1}^T \log p_{\theta}(y_t|y_{1:t-1}),
            \end{equation*}
            with $ p_\theta(y_1|y_0)\coloneqq\int  g_{\theta}(y_1|x_1) \mu_{\theta}(x_1) \textrm{d}x_1$ and for $t\geq 2$
            \begin{align*}
             p_{\theta}(y_t|y_{1:t-1})&=\int g_{\theta}(y_t|x_t) p_{\theta}(x_t|y_{1:t-1})\textrm{d}x_t.
            \end{align*}
        The posteriors $p_{\theta}(x_t|y_{1:t})$ and log-likelihood $p_\theta(y_{1:T})$ are available analytically for only a very restricted class of SSM such as linear Gaussian models. For non-linear SSM, PF provides approximations of such quantities.
    
    \subsection{Particle Filtering} \label{sec:particle_filtering}
        PF are Monte Carlo methods entailing the propagation of $N$ weighted particles $(w^i_t,X^i_t)_{i\in[N]}$, here $[N]\coloneqq\{ 1,...,N \}$, over time to approximate the filtering distributions $p_{\theta}(x_t|y_{1:t})$ and log-likelihood $\ell(\theta)$. Here $X^i_t\in\mathcal{X}$ denotes the value of the $i^{\text{th}}$ particle at time $t$ and $\mathbf{w}_{t}\coloneqq(w_{t}^{1},...,w_{t}^{N})$ are weights satisfying $w^i_t\geq 0,~\sum_{i=1}^N w^i_t=1$. Unlike variational methods, PF methods provide consistent approximations under weak assumptions as $N \rightarrow \infty$ \cite{delmoral2004}. 
        In the general setting, particles are sampled according to proposal distributions $q_{\phi}(x_{1}|y_1)$ at time $t=1$ and  $q_{\phi}(x_{t}|x_{t-1},y_t)$ at time $t\geq2$ prior to weighting and resampling. One often chooses $\theta=\phi$ but this is not necessarily the case \cite{le2017auto,maddison2017filtering,naesseth2017variational}.

        
        \begin{algorithm}
        \caption{Standard Particle Filter}
        \label{alg:pf}
        \setlength{\parindent}{0pt}
        \begin{algorithmic}[1]
            \STATE{Sample $X_{1}^{i}\stackrel{\text{i.i.d.}}{\sim} q_{\phi}(\cdot|y_1)$ ~for $i \in [N]$}
           \STATE{Compute $\omega_1^{i}=\frac{p_\theta(X^{i}_1,y_1)}{q_{\phi}(X^{i}_1|y_1)}$~for $i \in [N]$}
          \STATE{$\hat{\ell}(\theta) \leftarrow\frac{1}{N}\sum_{i=1}^{N}\omega_1^{i}$}
        \FOR{$t=2,...,T$}
            \STATE{Normalize weights $w^i_{t-1}\propto \omega^i_{t-1}$, $\sum_{i=1}^N  w^i_{t-1}=1$}
            \STATE{Resample $\tilde{X}^i_{t-1}\sim \sum_{i=1}^N  w^i_{t-1} \delta_{X^i_{t-1}}$ ~for $i \in [N]$}
                \STATE{Sample $X_{t}^{i}\sim q_{\phi}(\cdot|\tilde{X}^i_{t-1},y_t)$  ~for $i \in [N]$}
                \STATE{Compute   $\omega^i_t=\frac{p_\theta(X_{t}^{i},y_t|\tilde{X}^i_{t-1})}{q_{\phi}(X_{t}^{i}|\tilde{X}^i_{t-1},y_t)}$~for $i \in [N]$}
            \STATE{Compute $\hat{p}_{\theta}(y_{t}|y_{1:t-1})=\frac{1}{N}\sum_{i=1}^{N}\omega^i_t $}
            \STATE{$\hat{\ell}(\theta) \leftarrow \hat{\ell}(\theta)+ \log{\hat{p}}_{\theta}(y_{t}|y_{1:t-1})$}
        \ENDFOR
        \STATE{{\bfseries Return:} log-likelihood estimate $\hat{\ell}(\theta)=\log \hat{p}_{\theta}(y_{1:T})$}
        \end{algorithmic}
        \end{algorithm}
        \vspace{-0.3cm}
         A generic PF is described in \cref{alg:pf} where $p_\theta(x_1,y_1)\coloneqq\mu_\theta(x_1)g_\theta(y_1|x_1)$ and $p_\theta(x_t,y_t|x_{t-1})\coloneqq f_\theta(x_t|x_{t-1})g_\theta(y_t|x_t)$.  Resampling is performed in step 6 of \cref{alg:pf}; it ensures particles with high weights are replicated and those with low weights are discarded, allowing one to focus computational efforts on `promising' regions.
        The scheme used in \cref{alg:pf} is known as multinomial resampling and is unbiased (as are other traditional schemes such as stratified and systematic \cite{chopin2020introduction}), i.e.
        \begin{equation}\label{resamplingunbiased}
        \mathbb{E}\left[\tfrac{1}{N} {\textstyle\sum}_{i=1}^N \psi(\tilde{X}^i_t)\right]=\mathbb{E}\left[ {\textstyle\sum}_{i=1}^N w_t^i \psi(X_t^i)\right],
        \end{equation}
       for any $\psi:\mathcal{X}\rightarrow \mathbb{R}$. This property guarantees $\exp \small(\hat{\ell}(\theta)\small)$ is an unbiased estimate of the likelihood $\exp(\ell(\theta))$ for any $N$.
        
        Henceforth, let $\mathcal{X}=\mathbb{R}^{d_x}$, $\theta\in \Theta=\mathbb{R}^{d_{\theta}}$ and $\phi\in \Phi=\mathbb{R}^{d_{\phi}}$. We assume here that $\theta \mapsto \mu_{\theta}(x)$, $\theta \mapsto f_{\theta}(x'|x)$ and $\theta \mapsto g_{\theta}(y_t|x)$ are differentiable for all $x,x'$ and $t\in[T]$ and $\theta \mapsto {\ell}(\theta)$ is differentiable. These assumptions are satisfied by a large class of SSMs. We also assume that we can use the reparameterization trick \cite{kingma2013auto} to sample the particles; i.e.  we have $\Gamma_\phi(y_1,U) \sim q_\phi(x_1|y_1), \Psi_\phi(y_t,x_{t-1},U) \sim q_{\phi}(x_t|x_{t-1},y_t)$ for some mappings $\Gamma_\phi,\Psi_\phi$ differentiable w.r.t. $\phi$ and $U\sim \lambda$, $\lambda$ being independent of $\phi$.

    \subsection{Related Work and Contributions}\label{sec:simulated_likelihood} \label{sec:related}
    Let  $\bf{U}$ be the set of all random variables used to sample and resample the particles. The distribution of $\bf{U}$ is $(\theta,\phi)$-independent as we use the reparameterization trick\footnote{For example, multinomial resampling relies on $N$ uniform random variables.}. However, even if we sample and fix $\bf{U}=\bf{u}$, resampling involves sampling from an atomic distribution and introduces discontinuities in the particles selected when $\theta,\phi$ vary. 

    \begin{figure}[!ht]
    \centering
        \begin{subfigure}{\linewidth}
            \centering
            \includegraphics[width=0.45\linewidth]{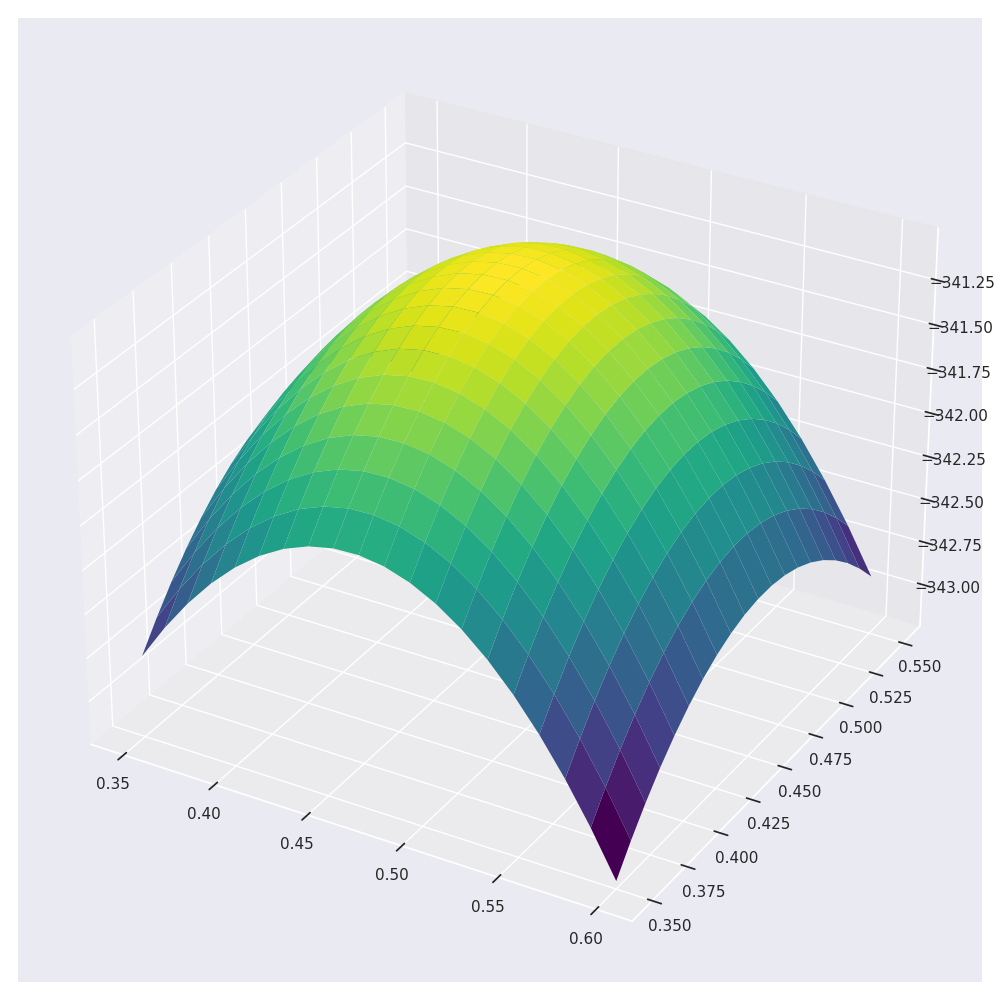}
            \includegraphics[width=0.45\linewidth]{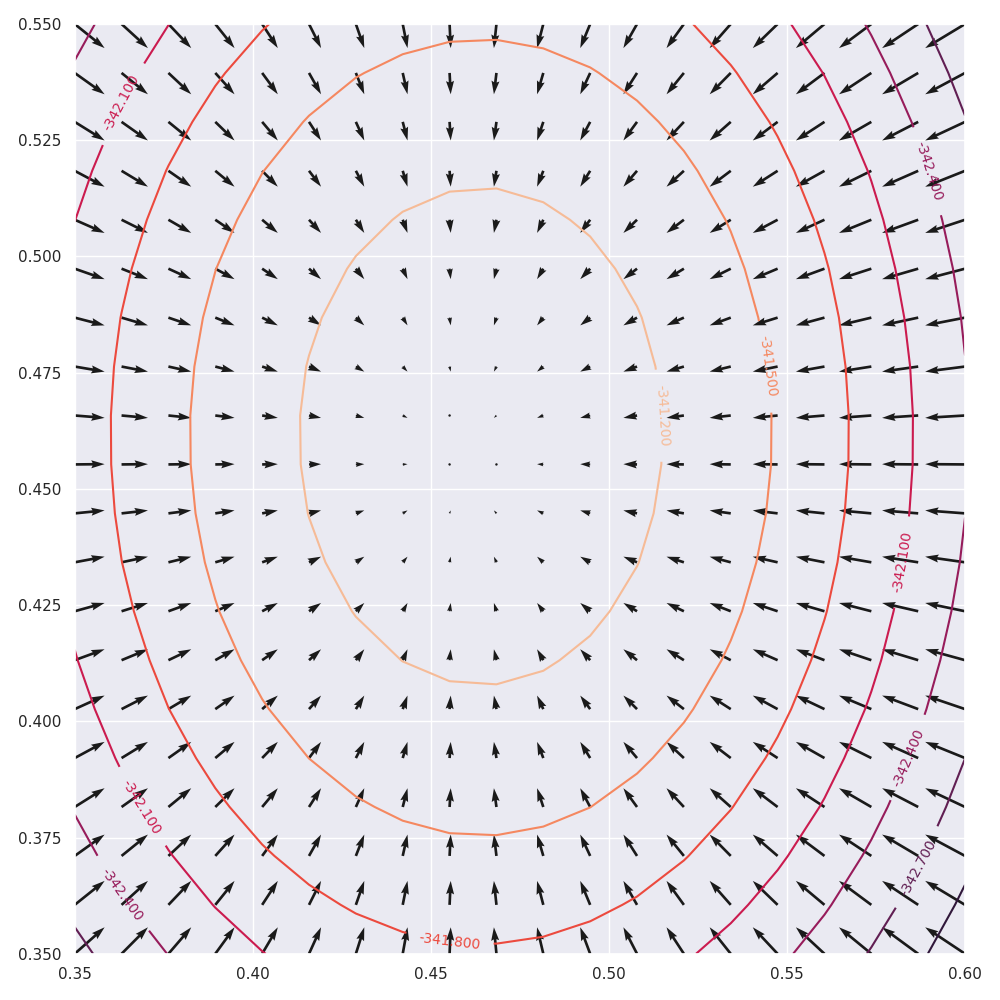}
            \caption{Kalman Filter}
        \end{subfigure}
        \begin{subfigure}{\linewidth}
            \centering
            \includegraphics[width=0.45\linewidth]{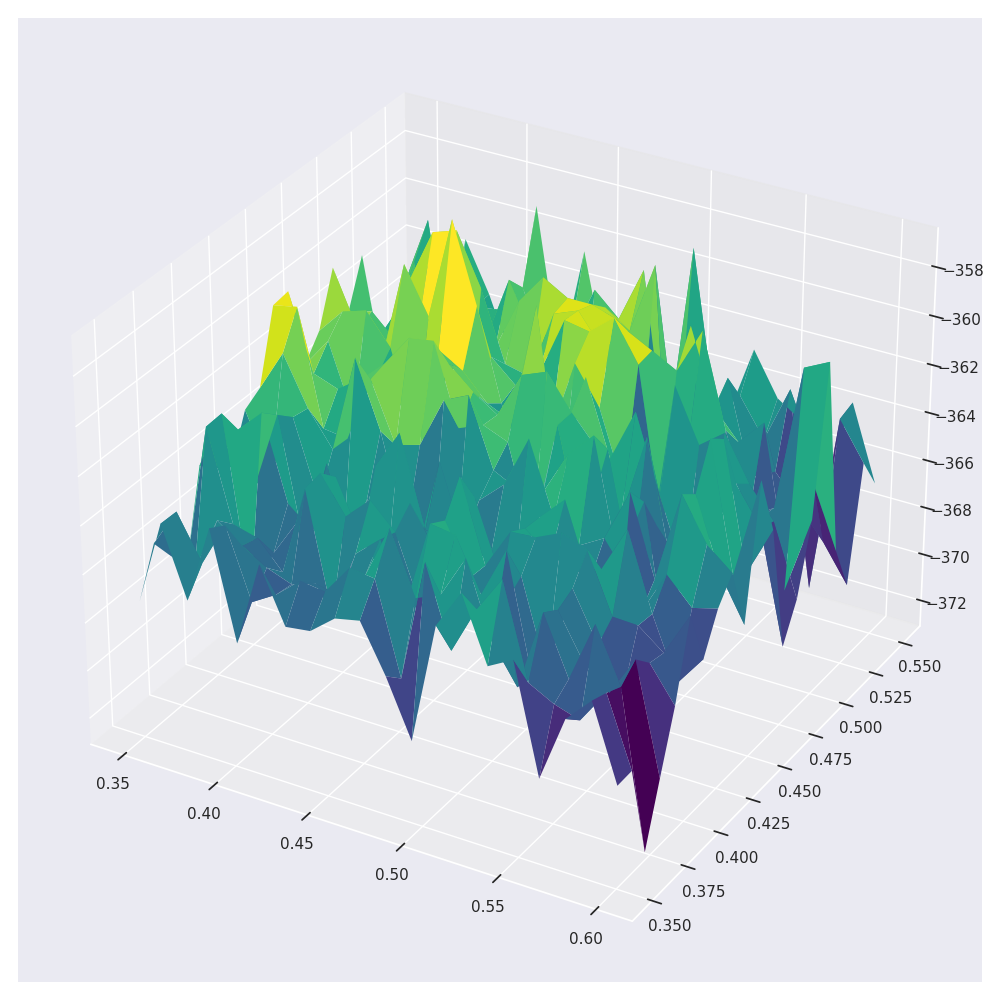}
            \includegraphics[width=0.45\linewidth]{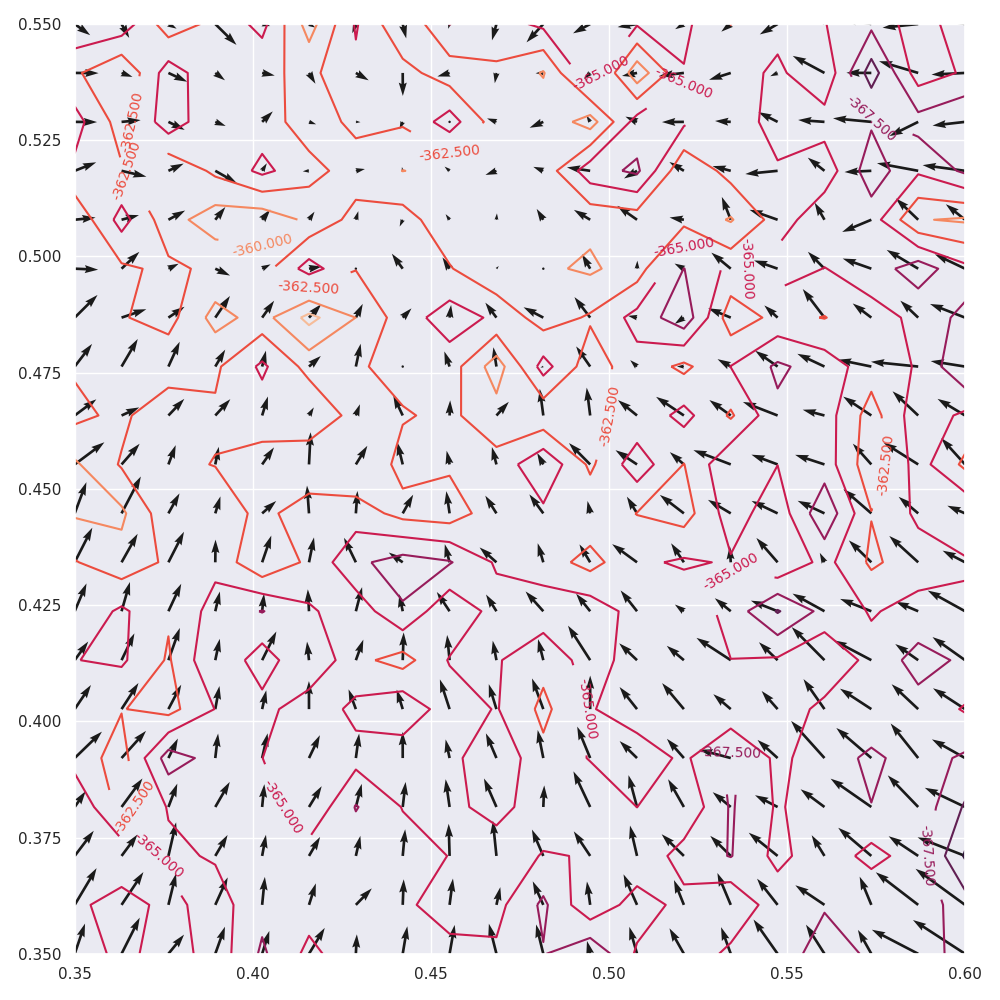}
            \caption{Standard PF}
        \end{subfigure}
        \begin{subfigure}{\linewidth}
            \centering
            \includegraphics[width=0.45\linewidth]{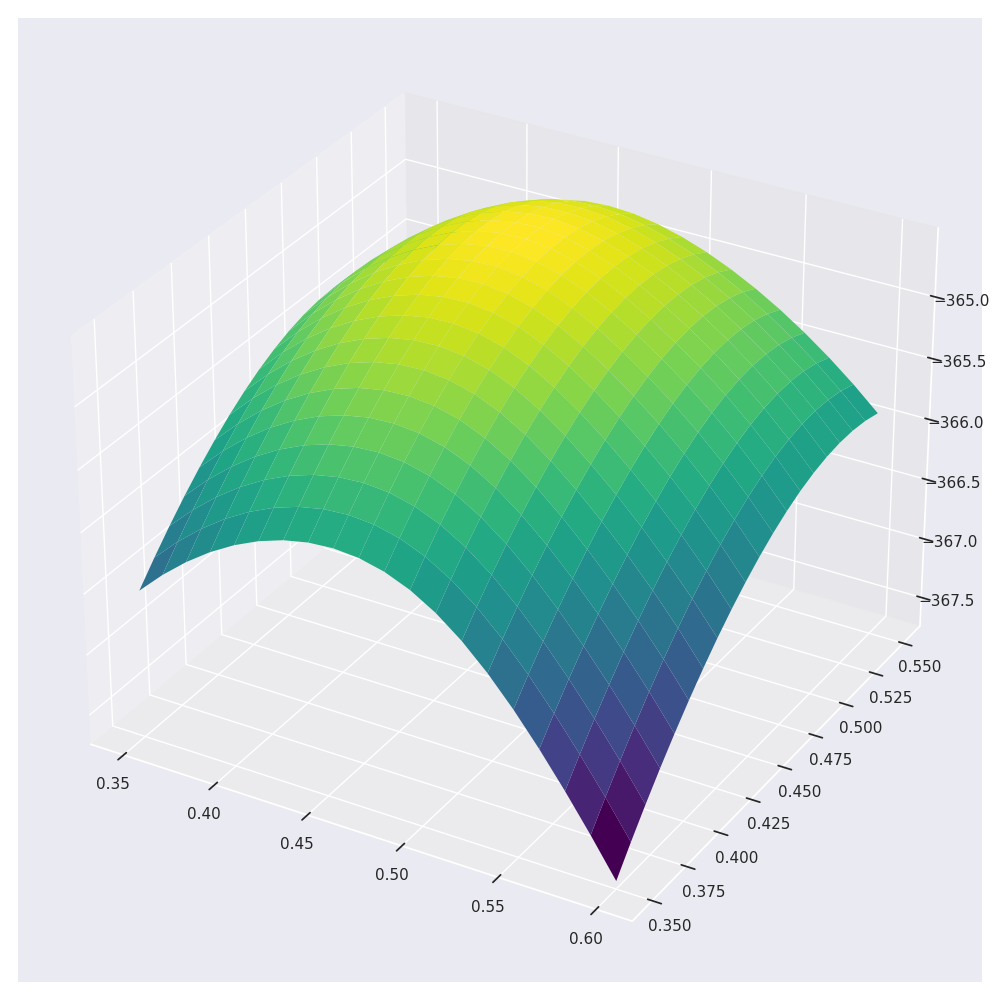}
            \includegraphics[width=0.45\linewidth]{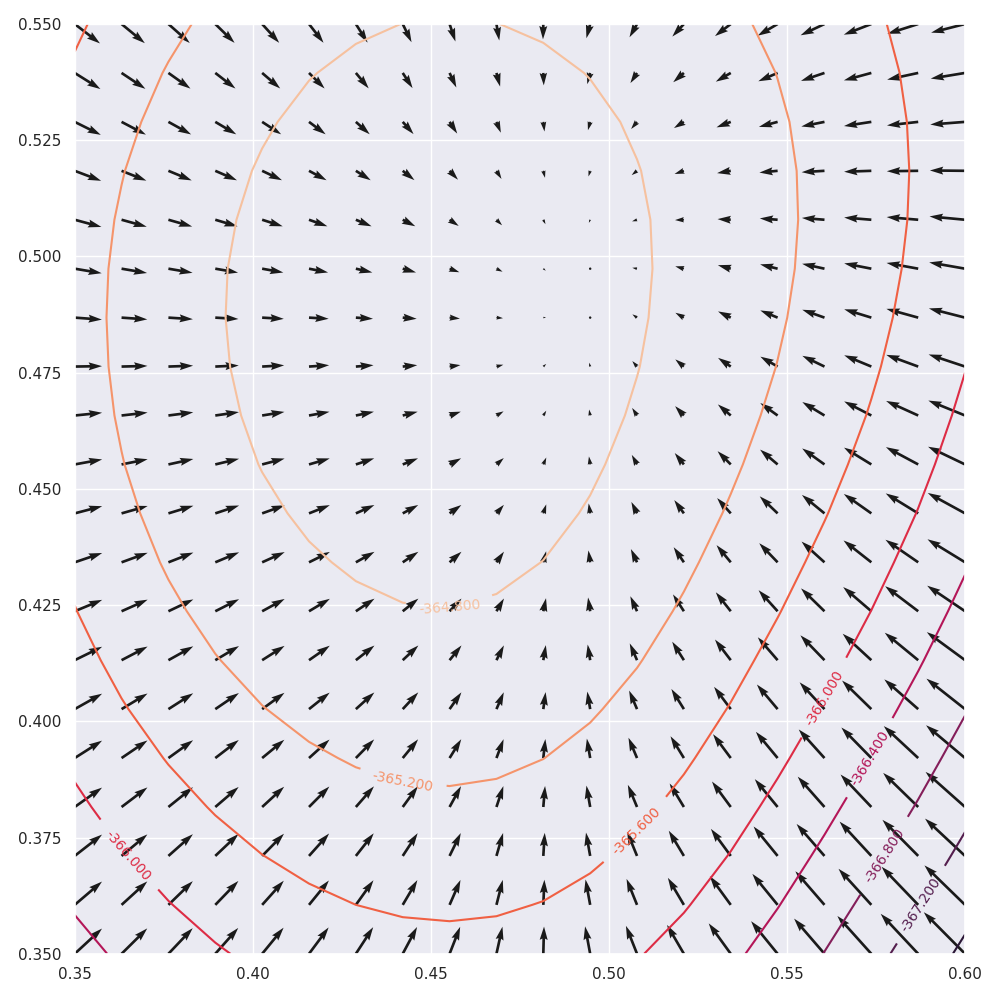}
            \caption{Differentiable PF}
        \end{subfigure}

        \caption{Left: Log-likelihood $\ell(\theta)$ and PF estimates $\hat{\ell}(\theta;\phi,\bf{u})$ for linear Gaussian SSM, given in Section \ref{subsec:LGM}, with $d_\theta=2$ $d_x=2$, and $T=150,N=50$.
        Right: $\nabla_{\theta} \ell(\theta)$ and $\nabla_{\theta} \hat{\ell}(\theta;\phi,\bf{u})$.}
        \label{fig:surfaces}
        \label{fig:vectorField}
        \vspace{-0.5cm}
    \end{figure}

     For $d_x=1$, \citet{malikpitt2011particle} make $\theta \mapsto \hat{\ell}(\theta;\phi,\mathbf{u})$ continuous w.r.t. $\theta$ by sorting the particles and then sampling from a smooth approximation of their cumulative distribution function. For $d_x>1$, \citet{lee2008towards} proposes a smoother but only piecewise continuous estimate. \citet{dejong2013efficient} returns a differentiable log-likelihood estimate $\hat{\ell}(\theta; \phi, \mathbf{u})$ by using a marginal PF \cite{klaas2012toward}, where importance sampling is performed on a collapsed state-space. However, the standard marginal PF uses the proposal $q_\phi(x_t):=\sum_{i=1}^N w_{t-1}^{i}q_\phi(x_t|X^i_{t-1},y_t)$ from which one cannot generally sample smoothly for arbitrary mixture components. As a consequence they instead suggest using a simple Gaussian distribution for $q_\phi(x_t)$,  which can lead to poor estimates for multimodal posteriors. Moreover, in contrast to standard PF, this marginal PF cannot be applied in scenarios where the transition density can only be sampled from (e.g. using the reparameterization trick) but not evaluated pointwise \cite{murray2013disturbance}, as the importance weight would be intractable. The implicit reparameterization method of \citet{graves2016stochastic} may be used to obtain low variance gradient estimates with a mixture proposal. This method however is only compatible for component distributions with tractable conditional CDFs, such as Gaussian distributions. An alternative unbiased estimate of the likelihood based on dynamic programming may also be obtained  \cite{finke2016embedded,aitchison2018tensor}. As emphasized by \citet{aitchison2018tensor}, this estimate is differentiable. This approach is again limited however to a restricted class of proposal distributions, such as an unweighted mixture proposal, which may perform poorly for slow-mixing time-series.
     
     In the context of robot localization, a modified resampling scheme has been proposed in \cite{karkus2018particle,ma2019particle,ma2020discriminative} referred to as `soft-resampling' (SPF). SPF has parameter $\alpha\in[0,1]$ where $\alpha=1$ corresponds to regular PF resampling and $\alpha=0$ is essentially sampling particles uniformly at random. The resulting PF-net is said to be differentiable but computes gradients that ignore the non-differentiable component of the resampling step. \citet{jonschkowski2018differentiable} proposed another PF scheme which is said to be differentiable but simply ignores the non-differentiable resampling terms and proposes new states based on the observation and some neural network. This approach however does not propagate gradients through time. Finally, \citet{zhu2020towards} propose a differentiable resampling scheme based on transformers but they report that the best results are achieved when not backpropagating through it, due to exploding gradients. Hence no fully differentiable PF is currently available in the literature \cite{kloss2020train}.

    PF methods have also been fruitfully exploited in Variational Inference (VI) to estimate $\theta,\phi$ \cite{le2017auto,maddison2017filtering,naesseth2017variational}. As $\mathbb{E}_{\mathbf{U}}[\exp \big(\hat{\ell}(\theta;\phi,\mathbf{U})\big)]=\exp(\ell(\theta))$ is an unbiased estimate of $\exp(\ell(\theta))$ for any $N, \phi$ for standard PF, then one has indeed by Jensen's inequality
        \begin{equation}\label{ELBO}
        \ell^{\mathrm{ELBO}}(\theta,\phi)\coloneqq\mathbb{E}_{\mathbf{U}}[\hat{\ell}(\theta;\phi,\mathbf{U})]\leq \ell(\theta).
        \end{equation}
    The standard ELBO corresponds to $N=1$ and many variational families for approximating $p_\theta(x_{1:T}|y_{1:T})$ have been proposed in this context \cite{archer2015black,krishnan2017structured,rangapuram2018deep}. The variational family induced by a PF differs significantly as $ \ell^{\mathrm{ELBO}}(\theta,\phi) \rightarrow \ell(\theta)$ as $N\rightarrow \infty$ and thus yields a variational approximation converging to $p_\theta(x_{1:T}|y_{1:T})$. This attractive property comes at a computational cost; i.e. the PF approach trades off fidelity to the posterior with computational complexity. 
    While unbiased gradient estimates of the PF-ELBO \eqref{ELBO} can be computed, they suffer from high variance as the resampling steps require having to use REINFORCE gradient estimates \cite{williams1992simple}. Consequently, \citet{HirtDellaportas2019,le2017auto,maddison2017filtering,naesseth2017variational} use biased gradient estimates which ignore these terms, yet report improvements as $N$ increases over standard VI approaches and Importance Weighted Auto-Encoders (IWAE) \cite{burda2016importance}. 
    
    Finally, if one is only interested in estimating $\theta$ (and not some distinct $\phi$), then particle techniques approximating pointwise the score vector $\nabla_\theta \ell(\theta)$ are also available \cite{poyiadjis2011particle,kantas2015particle}.
        
The contributions of this paper are four-fold. 

\begin{itemize}[topsep=0pt,itemsep=0pt,partopsep=1pt,parsep=1pt,listparindent=2pt,leftmargin=12pt]
    \item We propose the first fully Differentiable Particle Filter (DPF) which can use general proposal distributions. DPF provides a differentiable estimate of  $\ell(\theta)$, see Figure \ref{fig:surfaces}-c, and more generally differentiable estimates of PF-based losses. Empirically, in a VI context, DPF-ELBO gradient estimates also exhibit much smaller variance than those of PF-ELBO.
    
    \item We provide quantitative convergence results on the differentiable resampling scheme and establish consistency results for DPF.
    
    \item We show that existing techniques provide inconsistent gradient estimates and that the non-vanishing  bias can be very significant, leading practically to unreliable parameter estimates.

    \item We demonstrate that DPF empirically outperforms recent alternatives for end-to-end parameter estimation on a variety of applications.
\end{itemize}
Proofs of results are given in the Supplementary Material.

\section{Resampling via Optimal Transport} \label{sec:resampling_via_ot}

    \subsection{Optimal Transport and the Wasserstein Metric}\label{sec:optimal_transport}
    Since Optimal Transport (OT) \cite{peyr2019computational,villani2008optimal} is a core component of our scheme, the basics are presented here. Given two probability measures $\alpha,\beta$ on $\mathcal{X}=\mathbb{R}^{d_x}$ the squared 2-Wasserstein metric between these measures is given by       
    \begin{align}\label{eq:Wasserstein}
            \wass_2^{2}(\alpha, \beta) &=\min_{\mathcal{P}\in \mathcal{U}(\alpha,\beta)} \mathbb{E}_{(U,V) \sim \mathcal{P}}\big[||U-V||^2\big],
        \end{align}
        where $\mathcal{U}(\alpha,\beta)$ the set of distributions on $\mathcal{X}\times\mathcal{X}$ with marginals $\alpha$ and $\beta$, and the minimizing argument of (\ref{eq:Wasserstein}) is the OT plan denoted $\mathcal{P}^{\mathrm{OT}}$. Any element $\mathcal{P} \in \mathcal{U}(\alpha,\beta)$ allows one to ``transport'' $\alpha$ to $\beta$ (and vice-versa) i.e.
        \begin{align*}
            &\beta(\textrm{d}v) =\int \mathcal{P}(\textrm{d}u,\textrm{d}v) = \int \mathcal{P}(\textrm{d}v|u)\alpha(\textrm{d}u).
        \end{align*}
For atomic probability measures $\alpha_N = \sum_{i=1}^N a_i \delta_{u_i}$ and $\beta_N = \sum_{j=1}^N b_j \delta_{v_j}$ with weights $\mathbf{a}=(a_i)_{i \in [N]}$, $\mathbf{b}=(b_j)_{j \in [N]}$, and atoms $\mathbf{u}=(u_i)_{i \in [N]}$, $\mathbf{v}=(v_j)_{j \in [N]}$, one can show that    
    \begin{align}\label{eq:Wassersteinatomic}
             \wass_2^{2}(\alpha_N, \beta_N)=\min_{\mathbf{P}\in \mathcal{S}(\mathbf{a}, \mathbf{b})} \textstyle{\sum}_{i=1}^N\sum_{j=1}^N c_{i,j}p_{i,j},
        \end{align}
where any $\mathcal{P}\in\mathcal{U}(\alpha_N,\beta_N)$ is of the form 
\begin{equation*}
    \mathcal{P}(\textrm{d}u,\textrm{d}v)={\textstyle\sum}_{i,j}~p_{i,j}\delta_{u_i}(\textrm{d}u)\delta_{v_j}(\textrm{d}v),
\end{equation*}
$c_{i,j}=||u_i-v_j||^2$, $\mathbf{P}=(p_{i,j})_{i,j \in[N]}$ and $\mathcal{S}(\mathbf{a}, \mathbf{b})=\big\{\mathbf{P} \in [0,1]^{N \times N}:\sum_{j=1}^Np_{i,j}=a_i,~\sum_{i=1}^N p_{i,j} = b_j\big\}.$
In such cases, one has
        \begin{equation}
            \mathcal{P}(\textrm{d}v|u=u_i)={\textstyle\sum}_{j}~a^{-1}_i
             p_{i,j}\delta_{v_j}(\textrm{d}v).\label{eq:transportmeasures_discrete}
        \end{equation}
        The optimization problem \eqref{eq:Wassersteinatomic} may be solved through linear programming.
        It is also possible to exploit the dual formulation
        \begin{equation}\label{eq:Wassersteindual}
            \wass_2^{2}(\alpha_N, \beta_N) =\max_{\mathbf{f},\mathbf{g} \in \mathcal{R}(C)} \mathbf{a}^{t}\mathbf{f}+\mathbf{b}^{t}\mathbf{g},
        \end{equation}
        where $\mathbf{f}=(f_{i})$, $\mathbf{g}=(g_{i})$, $\mathbf{C}=(c_{i,j})$ and $\mathcal{R}(\mathbf{C})= \{\mathbf{f},\mathbf{g} \in \mathbb{R}^{N} |  f_i+g_j \leq c_{i,j}, i,j\in[N]\}$.

    \subsection{Ensemble Transform Resampling}
        The use of OT for resampling in PF has been pioneered by \citet{reich2012nonparametric}. Unlike standard resampling schemes \cite{chopin2020introduction,doucet2018sequential}, it relies not only on the particle weights but also on their locations. 
        
        At time $t$, after the sampling step (Step 7 in \cref{alg:pf}), $\alpha^{(t)}_N=\tfrac{1}{N}\sum_{i=1}^N \delta_{X_t^i}$ is a particle approximation of $\alpha^{(t)}\coloneqq\int q_\phi(x_t|x_{t-1},y_t) p_{\theta}(x_{t-1}|y_{1:t-1})\textrm{d}x_{t-1}$ and $\beta^{(t)}_N=\sum w_t^{i} \delta_{X_t^i}$ is an approximation of $\beta^{(t)}\coloneqq p_\theta(x_t|y_{1:t})$. Under mild regularity conditions, the OT plan minimizing $\wass_2(\alpha^{(t)}, \beta^{(t)})$ is of the form $\mathcal{P}^{\mathrm{OT}}(\textrm{d}x,\textrm{d}x')=\alpha^{(t)}(\textrm{d}x)\delta_{\mathbf{T}^{(t)}(x)}(\textrm{d}x')$ where $\mathbf{T}^{(t)}:\mathcal{X} \rightarrow \mathcal{X}$ is a deterministic map; i.e if $X\sim \alpha^{(t)}$ then $\mathbf{T}^{(t)}(X)\sim \beta^{(t)}$. It is shown in \cite{reich2012nonparametric} that one can one approximate this transport map with the `Ensemble Transform' (ET) denoted $\mathbf{T}^{(t)}_{N}$. This is found by solving the OT problem (\ref{eq:Wassersteinatomic}) between $\alpha^{(t)}_N$ and $\beta^{(t)}_N$ and taking an expectation w.r.t. \eqref{eq:transportmeasures_discrete}, that is
        \begin{equation}\label{eq:Reichaveraging}
        \tilde{X}^i_t=N {\textstyle\sum}_{k=1}^N~p^{\mathrm{OT}}_{i,k} X^k_t\coloneqq\mathbf{T}^{(t)}_{N}(X^i_t),
        \end{equation}
        where we slightly abuse notation as $\mathbf{T}^{(t)}_{N}$ is a function of $X^{1:N}_t$. 
        \citet{reich2012nonparametric} uses this update instead of using $\tilde{X}^i_{t}\sim \sum_{i=1}^N  w^i_{t} \delta_{X^i_{t}}$. This is justified by the fact that, as $N\rightarrow \infty$, $\mathbf{T}^{(t)}_{N}(X^i_t) \rightarrow \mathbf{T}^{(t)}(X^i_t)$ in some weak sense \cite{reich2012nonparametric,myers2019sequential}. Compared to standard resampling schemes, the ET only satisfies \eqref{resamplingunbiased} for affine functions $\psi$.
        
       This OT approach to resampling involves solving the linear program \eqref{eq:Wasserstein} at cost $O(N^3\log N)$ \cite{bertsimas1997introduction}. This is not only prohibitively expensive but moreover the resulting ET is not differentiable. To address these problems, one may instead rely on entropy-regularized OT \cite{cuturi2013sinkhorn}.

\section{Differentiable Resampling via Entropy-Regularized Optimal Transport}
    \label{sec:differentiable}
    \subsection{Entropy-Regularized Optimal Transport} \label{sec:diff_ot}
        Entropy-regularized OT may be used to compute a transport matrix that is differentiable with respect to inputs and computationally cheaper than the non-regularized version, i.e. we consider the following regularized version of (\ref{eq:Wassersteinatomic}) for some $\epsilon>0$ \cite{cuturi2013sinkhorn, peyr2019computational}
        \begin{equation}\label{eq:Wassersteinentropy}
            \hspace{-0.25cm}\wass^2_{2, \epsilon}(\alpha_N, \beta_N) =\hspace{-0.2cm}\min_{\mathbf{P}\in \mathcal{S}(\mathbf{a}, \mathbf{b})} \sum_{i,j=1}^N  p_{i,j} \Big(c_{i,j}+\epsilon \log\frac{p_{i,j}}{a_i b_j}\Big).\hspace{-0.15cm}
        \end{equation}
        The function minimized in (\ref{eq:Wassersteinentropy}) is strictly convex and hence admits a unique minimizing argument $\mathbf{P}^{\mathrm{OT}}_{\epsilon}=(p^{\mathrm{OT}}_{\epsilon,i,j})$. $\wass^2_{2, \epsilon}(\alpha_N, \beta_N)$ can also be computed using the regularized dual; i.e. $\wass^2_{2, \epsilon}(\alpha_N, \beta_N) =\max_{\mathbf{f},\mathbf{g}}\mathrm{DOT}_{\epsilon}(\mathbf{f},\mathbf{g})$ with
        \begin{align}\label{eq:Wassersteinentropydualrange}
        &\mathrm{DOT}_{\epsilon}(\mathbf{f},\mathbf{g})\coloneqq\mathbf{a}^{t}\mathbf{f}+\mathbf{b}^{t}\mathbf{g}-\epsilon\mathbf{a}^{t}\mathbf{M}\mathbf{b}
        \end{align}
        where $(\mathbf{M})_{i,j}\coloneqq\exp (\epsilon^{-1} (f_i+g_j-c_{i,j}))-1$ and $\mathbf{f},\mathbf{g}$ are now unconstrained. For the dual pair $(\mathbf{f}^{*},\mathbf{g}^{*})$ maximizing (\ref{eq:Wassersteinentropydualrange}),
        we have $\nabla_{\mathbf{f}, \mathbf{g}}\mathrm{DOT}_\epsilon(\mathbf{f}, \mathbf{g})|_{(\mathbf{f^*}, \mathbf{g^*})} = \mathbf{0}$. 
        This first-order condition leads to
        \begin{equation}\label{eq:sinkhorn}
        f_i^{*} = \mathcal{T}_\epsilon(\mathbf{b}, \mathbf{g}^*, \mathbf{C}_{i:}), \hspace{1cm}
        g_i^{*} = \mathcal{T}_\epsilon(\mathbf{a}, \mathbf{f}^*, \mathbf{C}_{:i}),
        \end{equation}
        where $\mathbf{C}_{i:}$ (resp. $\mathbf{C}_{:i}$) is the $i^{\text{th}}$ row (resp. column) of $\mathbf{C}$.
        Here $\mathcal{T}_\epsilon: \mathbb{R}^N \times \mathbb{R}^N \times \mathbb{R}^{N} \rightarrow \mathbb{R}^N$ denotes the mapping 
        \begin{equation}\notag
        \mathcal{T}_\epsilon(\mathbf{a}, \mathbf{f}, \mathbf{C}_{:,i}) = -\epsilon~\log \sum_k \exp\big\{\log a_k +  \epsilon^{-1} \left(f_k - c_{k,i}\right)\big\}.
        \end{equation}
        One may then recover the regularized transport matrix as
        \begin{equation}\label{eq:tranport_plan}
          p^{\mathrm{OT}}_{\epsilon,i,j}=a_ib_j\exp\left(\epsilon^{-1} (f^{*}_i+g^{*}_j-c_{i,j})\right).
        \end{equation}
        
        The dual can be maximized using the Sinkhorn algorithm introduced for OT in the seminal paper of \citet{cuturi2013sinkhorn}. \cref{algo:sinkhorn} presents the implementation of \citet{feydy2018interpolating} where the fixed point updates based on \cref{eq:sinkhorn} have been stabilized.
    
            \begin{algorithm}[H]
            \caption{Sinkhorn Algorithm}
            \label{algo:sinkhorn}
            \begin{algorithmic}[1]
                \STATE{{\bfseries Function} $\mathbf{Potentials}(\mathbf{a},\mathbf{b},\mathbf{u},\mathbf{v})$}
                \STATE{\bfseries Local variables:} $\mathbf{f}, \mathbf{g} \in \mathbb{R}^N$
                \STATE{\bfseries Initialize:} $\mathbf{f} = \mathbf{0}$, $\mathbf{g} = \mathbf{0}$
                \STATE{Set $\mathbf{C} \leftarrow \mathbf{u} \mathbf{u}^t+ \mathbf{v} \mathbf{v}^t-2 \mathbf{u} \mathbf{v}^t$}
                \WHILE{stopping criterion not met}
                      \FOR{$i\in[N]$}
                            \STATE $f_i \leftarrow \frac 1 2 \left(f_i +\mathcal{T}_\epsilon(\mathbf{b}, \mathbf{g}, \mathbf{C}_{i:}) \right)$
                            \STATE $g_i \leftarrow \frac 1 2 \left(g_i+\mathcal{T}_\epsilon(\mathbf{a}, \mathbf{f}, \mathbf{C}_{:i}) \right)$
                        \ENDFOR
                \ENDWHILE
                \STATE{{\bfseries Return} {$\mathbf{f}, \mathbf{g}$ }}
            \end{algorithmic}
            \end{algorithm}
           \vspace{-0.3cm}
            The resulting dual vectors $(\mathbf{f}^{*},\mathbf{g}^{*})$ can then be differentiated for example using automatic differentiation through the Sinkhorn algorithm loop \cite{flamary2018wasserstein}, or more efficiently using ``gradient stitching'' on the dual vectors at convergence, which we do here (see \citet{feydy2018interpolating} for details). The derivatives of $\mathbf{P}^{\mathrm{OT}}_{\epsilon}$ are readily accessible by combining the derivatives of (\ref{eq:sinkhorn}) with the derivatives of (\ref{eq:tranport_plan}), using automatic differentiation at no additional cost.

    \subsection{Differentiable Ensemble Transform Resampling} \label{sec:differentiable_et}
We obtain a differentiable ET (DET), denoted $\mathbf{T}^{(t)}_{N, \epsilon}$, by computing the entropy-regularized OT using Algorithm \ref{algo:differentiableResampling} for the weighted particles ($\mathbf{X}_t, \mathbf{w}_t, N$) at time $t$
    \begin{align}\label{eq:Reichaveragingregularized}
                \tilde{X}^i_t=N {\textstyle\sum}_{k=1}^N~p^{\mathrm{OT}}_{\epsilon,i,k} X^k_t\coloneqq\mathbf{T}^{(t)}_{N, \epsilon}(X^i_t).
    \end{align}
    
    \begin{algorithm}[!ht]
    		\caption{DET Resampling}
    		\label{algo:differentiableResampling}
    		\begin{algorithmic}[1]
    		    \STATE{ {\bfseries Function} $\mathbf{EnsembleTransform}(\mathbf{X}, \mathbf{w}, N$)}
    			\STATE{$\mathbf{f}, \mathbf{g} \leftarrow$ $\mathbf{Potentials}(\mathbf{w},\frac{1}{N}\mathbf{1},\mathbf{X},\mathbf{X})$}
    			
    			\FOR{$i\in[N]$}
    			    \FOR{$j \in [N]$}
    			        \STATE {$p^{\mathrm{OT}}_{\epsilon,i,j} =\frac{ w_i}{N}\exp\Big(\frac{f_i+g_j-c_{i,j}}{\epsilon}\Big)$}
    			     \ENDFOR
    			 \ENDFOR
    			 \STATE{{\bfseries Return} $\tilde{\mathbf{X}}=N\mathbf{P}^{\mathrm{OT}}_{\epsilon} \mathbf{X}$}
    		\end{algorithmic}
    \end{algorithm}
   \vspace{-0.3cm}
Compared to the ET, the DET is differentiable and can be computed at cost $O(N^2)$ as it relies on the Sinkhorn algorithm. This algorithm converges quickly \cite{altschuler2017near} and is particularly amenable to GPU implementation.

The DPF proposed in this paper is similar to \cref{alg:pf} except that we sample from the proposal $q_\phi$ using the reparameterization trick and Step 6 is replaced by the DET. While such a differentiable approximation of the ET has previously been suggested in ML \cite{cuturidoucet2014,seguy2017large}, it has never been realized before that this could be exploited to obtain a DPF. In particular, we obtain differentiable estimates of expectations w.r.t.  the filtering distributions with respect to $\theta$ and $\phi$ and, for a fixed ``seed'' $\mathbf{U}=\mathbf{u}$~\footnote{Here $\mathbf{U}$ denotes only the set of $\theta,\phi$-independent random variables used to generate particles as, contrary to standard PF, DET resampling does not rely on any additional random variable.}, we obtain a differentiable estimate of the log-likelihood function $\theta \mapsto \hat{\ell}_{\epsilon}(\theta; \phi, \mathbf{u})$.

	    
Like ET, DET only satisfies \eqref{resamplingunbiased} for affine functions $\psi$. Unlike $\mathbf{P}^{\mathrm{OT}}$, $\mathbf{P}^{\mathrm{OT}}_{\epsilon}$ is sensitive to the scale of $\mathbf{X}_t$. To mitigate this sensitivity, one may compute $\delta(\mathbf{X}_t) = \sqrt{d_x} \max_{k\in[d_x]} \text{std}_{i}(X^i_{t,k})$ for $\mathbf{X}_t \in \mathbb{R}^{N \times d_x}$ and rescale $\mathbf{C}$ accordingly to ensure that $\epsilon$ is approximately independent of the scale and dimension of the problem. 

\section{Theoretical Analysis}
We show here that the gradient estimates of PF-based losses ignoring gradients terms due to resampling are not consistent and can suffer from a large non-vanishing bias. On the contrary, we establish that DPF provides consistent and differentiable estimates of the filtering distributions and log-likelihood function. This is achieved by obtaining novel quantitative convergence results for the DET.

\subsection{Gradient Bias from Ignoring Resampling Terms} 
We first provide theoretical results on the asymptotic bias of the gradient estimates computed from PF-losses, by dropping the gradient terms from resampling, as adopted in  \cite{HirtDellaportas2019,jonschkowski2018differentiable,karkus2018particle,le2017auto,ma2020discriminative,maddison2017filtering,naesseth2017variational}. We limit ourselves here to the ELBO loss. Similar analysis can be carried out for the non-differentiable resampling schemes and losses considered in robotics. 
\begin{proposition}\label{prop:asymptoticELBOgradient} Consider the PF in \cref{alg:pf} where $\phi$ is distinct from $\theta$ then, under regularity conditions, the expectation of the ELBO gradient estimate $\hat{\nabla}_{\theta}\ell^{\textup{ELBO}}(\theta,\phi)$ ignoring resampling terms considered in \cite{le2017auto,maddison2017filtering,naesseth2017variational} converges as $N\rightarrow \infty$ to 
\begin{align*}
    &\mathbb{E}[\hat{\nabla}_{\theta}\ell^{\textup{ELBO}}(\theta,\phi)] {\rightarrow}\int \nabla_{\theta}\log p_{\theta}(x_1,y_1)~p_{\theta}(x_1|y_1)\textup{d}x_1 \notag
    \\
    &+ {\textstyle\sum\limits_{t=2}^T}\int\nabla_{\theta} \log p_{\theta}(x_t,y_t|x_{t-1})~p_{\theta}(x_{t-1:t}|y_{1:t})\textup{d}x_{t-1:t}
\end{align*}
whereas Fisher's identity yields
\begin{align}\label{eq:scoreidentity}
    & \nabla_{\theta}\ell(\theta)= \int \nabla_{\theta}\log p_{\theta}(x_1,y_1)~ p_{\theta}(x_1|y_{1:T})\textup{d}x_1\\
&+ {\textstyle\sum\limits_{t=2}^T} \int \nabla_{\theta} \log p_{\theta}(x_t,y_t|x_{t-1})  ~p_{\theta}(x_{t-1:t}|y_{1:T})\textup{d}x_{t-1:t} \notag. 
\end{align}
\end{proposition}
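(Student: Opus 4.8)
The plan is to exploit that, because $\phi$ is distinct from $\theta$, the particle locations $X_t^i$ and the resampling indices are generated from the reparameterization seed $\mathbf{U}$ and the proposals $q_\phi$ alone; hence once the resampling terms are ignored (the ancestors $\tilde X_{t-1}^i$ are frozen and not differentiated in $\theta$), the sole $\theta$-dependence of $\hat\ell$ lives in the incremental weights $\omega_t^i$. First I would differentiate the per-step estimators. Using $\omega_t^i = p_\theta(X_t^i,y_t|\tilde X_{t-1}^i)/q_\phi(X_t^i|\tilde X_{t-1}^i,y_t)$ and $\nabla_\theta \omega_t^i = \omega_t^i \nabla_\theta \log p_\theta(X_t^i,y_t|\tilde X_{t-1}^i)$, one obtains $\nabla_\theta \log \hat p_\theta(y_t|y_{1:t-1}) = \sum_i w_t^i \nabla_\theta \log p_\theta(X_t^i, y_t|\tilde X_{t-1}^i)$, and similarly for $t=1$ with the term $\nabla_\theta \log p_\theta(X_1^i, y_1)$. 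Summing over $t$ then expresses $\hat{\nabla}_{\theta}\hat{\ell}$ as a sum of self-normalized weighted averages.

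Next I would establish the $N\to\infty$ limit of each term via the particle-filter law of large numbers. For $t=1$, self-normalized importance sampling against $q_\phi(\cdot|y_1)$ with weights $\omega_1^i$ converges to $\int \nabla_\theta \log p_\theta(x_1,y_1)\, p_\theta(x_1|y_1)\,\rd x_1$. For $t\geq 2$ the key observation is that the resampled ancestors $\tilde X_{t-1}^i$ empirically approximate the filter $p_\theta(x_{t-1}|y_{1:t-1})$, so the propagated pairs $(\tilde X_{t-1}^i, X_t^i)$ form an unweighted sample from $p_\theta(x_{t-1}|y_{1:t-1})\,q_\phi(x_t|x_{t-1},y_t)$; re-weighting by $\omega_t^i$ and self-normalizing yields convergence to
\[ \frac{\int \nabla_\theta \log p_\theta(x_t,y_t|x_{t-1})\, p_\theta(x_t,y_t|x_{t-1})\, p_\theta(x_{t-1}|y_{1:t-1})\,\rd x_{t-1:t}}{\int p_\theta(x_t,y_t|x_{t-1})\, p_\theta(x_{t-1}|y_{1:t-1})\,\rd x_{t-1:t}}. \]
The denominator is exactly $p_\theta(y_t|y_{1:t-1})$, while $p_\theta(x_{t-1}|y_{1:t-1})\,p_\theta(x_t,y_t|x_{t-1}) = p_\theta(x_{t-1:t}|y_{1:t})\, p_\theta(y_t|y_{1:t-1})$; after cancellation the limit is the filtering expectation $\int \nabla_\theta \log p_\theta(x_t,y_t|x_{t-1})\, p_\theta(x_{t-1:t}|y_{1:t})\,\rd x_{t-1:t}$, matching the claimed limit.

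To transfer convergence of the random per-seed gradient into convergence of its expectation $\mathbb{E}_{\mathbf U}[\hat{\nabla}_{\theta}\hat{\ell}]$, I would combine the in-probability convergence above with a uniform integrability argument supplied by the regularity conditions (finite moments of the weights and integrability of the scores $\nabla_\theta \log p_\theta$), invoking dominated convergence. For the exact gradient, Fisher's identity follows by differentiating $\ell(\theta)=\log\int p_\theta(x_{1:T},y_{1:T})\,\rd x_{1:T}$, using $\nabla_\theta p_\theta = p_\theta \nabla_\theta \log p_\theta$ together with the additive decomposition $\nabla_\theta \log p_\theta(x_{1:T},y_{1:T}) = \nabla_\theta \log p_\theta(x_1,y_1) + \sum_{t=2}^T \nabla_\theta \log p_\theta(x_t,y_t|x_{t-1})$; marginalizing each summand against the smoothing law gives \eqref{eq:scoreidentity}. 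The discrepancy is then transparent: the biased estimator integrates each score against the \emph{filtering} marginals $p_\theta(x_{t-1:t}|y_{1:t})$, whereas the exact gradient integrates against the \emph{smoothing} marginals $p_\theta(x_{t-1:t}|y_{1:T})$, and these disagree whenever $y_{t+1:T}$ is informative, yielding a non-vanishing bias.

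The main obstacle I anticipate is the rigorous justification of the $t\ge 2$ limit: one must invoke the particle-filter law of large numbers for the \textbf{joint} ancestor--offspring pair rather than the marginal filter, control the ratio arising from self-normalization (numerator and denominator converge jointly, and the denominator limit $p_\theta(y_t|y_{1:t-1})$ is strictly positive), and secure the uniform integrability needed to exchange $\lim_N$ with $\mathbb{E}_{\mathbf U}$. The regularity conditions are precisely what make these steps go through.
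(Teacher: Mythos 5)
Your proposal is correct and follows essentially the same route as the paper: differentiate only the weight terms to obtain $\hat{\nabla}_\theta\ell^{\mathrm{ELBO}}(\theta,\phi)=\sum_i w_1^i\nabla_\theta\log p_\theta(X_1^i,y_1)+\sum_{t= 2}^T\sum_i w_t^i\nabla_\theta\log p_\theta(X_t^i,y_t|X_{t-1}^{A_{t-1}^i})$, identify the almost-sure limit of each self-normalized weighted average via the particle-filter law of large numbers (the pairs $(\tilde X_{t-1}^i,X_t^i)$ targeting $p_\theta(x_{t-1}|y_{1:t-1})q_\phi(x_t|x_{t-1},y_t)$ before reweighting), and pass to the expectation under a uniform integrability condition, with Fisher's identity handled as in the paper (which merely recalls it, whereas you derive it). One remark in your favor: your limiting law $p_\theta(x_{t-1:t}|y_{1:t})$ matches the proposition as stated in the main text, while the paper's appendix display writes $p_\theta(x_{t-1:t}|y_{1:t-1})$, which appears to be a typo; your computation is the correct one.
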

\vspace{-0.5cm}
Hence, whereas we have $\nabla_{\theta}\ell^{\textup{ELBO}}(\theta,\phi)\rightarrow \nabla_{\theta}\ell(\theta)$ as $N \rightarrow \infty$ under regularity assumptions, the asymptotic bias of $\hat{\nabla}_{\theta}\ell^{\textup{ELBO}}(\theta,\phi)$ only vanishes if $p_{\theta}(x_{t-1:t}|y_{1:t})=p_{\theta}(x_{t-1:t}|y_{1:T})$; i.e. for models where the $X_t$ are independent. When $y_{t+1:T}$ do not bring significant information about $X_t$ given $y_{t:T}$, as for the models considered in \cite{le2017auto,maddison2017filtering,naesseth2017variational}, this is a reasonable approximation which explains the good performance reported therein. However, we show in \cref{sec:experiments} that this bias can also lead practically to inaccurate parameter estimation.

\subsection{Quantitative Bounds on the DET} 
 Weak convergence results for the ET have been established in \cite{reich2012nonparametric,myers2019sequential} and the DET in \cite{seguy2017large}. We provide here the first quantitative bound for the ET ($\epsilon=0$) and DET ($\epsilon>0$) which holds for any $N\geq 1$ by building upon results of \citep{li2020quantitative} and \citep{weed2018explicit}. We use the notation $\nu(\psi)\coloneqq\int \psi(x) \nu(\textup{d}x)$ for any measure $\nu$ and function $\psi$.
   
 \begin{proposition}\label{prop:CVDETDetailed}
           Consider atomic probability measures $\alpha_N=\sum_{i=1}^N a_i \delta_{Y^i}$ with $a_i>0$ and $\beta_N=\sum_{i=1}^N b_i \delta_{X^i}$, with support $\mathcal{X}\subset \mathbb{R}^d$. Let $\tilde{\beta}_N=\sum_{i=1}^N a_i \delta_{\tilde{X}^{i}_{N,\epsilon}}$ where $\tilde{\mathbf{X}}_{N,\epsilon}= \Delta^{-1} \mathbf{P}^{\mathrm{OT}}_\epsilon \mathbf{X}$ for $\Delta=\textup{diag}(a_1,...,a_N)$ and $\mathbf{P}^{\mathrm{OT}}_\epsilon$ is the transport matrix corresponding to the $\epsilon$-regularized OT coupling, $\mathcal{P}^{\mathrm{OT},N}_\epsilon$, between $\alpha_N$ and $\beta_N$. Let $\alpha, \beta$ be two other probability measures, also supported on $\mathcal{X}$, such that there exists a unique $\lambda$-Lipschitz optimal transport map $\mathbf{T}$ between them. Then for any bounded $1$-Lipschitz function $\psi$, we have
           \begin{multline}\label{eq:UpperBoundonDET}
     \left|\beta_N(\psi)-\tilde{\beta}_N(\psi)\right|
                    \leq 2\lambda^{1/2} 
        \mathcal{E}^{1/2}\left[\mathfrak{d}^{1/2}+ \mathcal{E}\right]^{1/2}\\
        + \max\{\lambda,  1\}\left[\wass_2(\alpha_N, \alpha)+  \wass_2(\beta_N, \beta)\right],
            \end{multline}
    where $\mathfrak{d}\coloneqq \sup_{x,y\in \mathcal{X}} |x-y|$ and $\mathcal{E}= \wass_2(\alpha_N, \alpha)+  \wass_2(\beta_N, \beta) + \sqrt{2\epsilon \log N}.$
\end{proposition}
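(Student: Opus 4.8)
The plan is to compare $\tilde{\beta}_N$ with $\beta_N$ by routing through the pushforward $\mathbf{T}_\#\alpha_N=\sum_i a_i\delta_{\mathbf{T}(Y^i)}$ of the source atoms under the genuine optimal map. Writing
\[
\beta_N(\psi)-\tilde{\beta}_N(\psi)=\bigl[\beta_N(\psi)-(\mathbf{T}_\#\alpha_N)(\psi)\bigr]+\bigl[(\mathbf{T}_\#\alpha_N)(\psi)-\tilde{\beta}_N(\psi)\bigr],
\]
I would bound the two brackets by the two summands on the right-hand side of \eqref{eq:UpperBoundonDET}, respectively. For the first bracket, since $\psi$ is $1$-Lipschitz, Kantorovich--Rubinstein duality gives $|\beta_N(\psi)-(\mathbf{T}_\#\alpha_N)(\psi)|\le\wass_2(\mathbf{T}_\#\alpha_N,\beta_N)$. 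Because $\mathbf{T}_\#\alpha=\beta$ and $\mathbf{T}$ is $\lambda$-Lipschitz, pushing an optimal coupling of $(\alpha_N,\alpha)$ through $(\mathbf{T},\mathbf{T})$ yields $\wass_2(\mathbf{T}_\#\alpha_N,\mathbf{T}_\#\alpha)\le\lambda\wass_2(\alpha_N,\alpha)$; the triangle inequality then gives $\wass_2(\mathbf{T}_\#\alpha_N,\beta_N)\le\lambda\wass_2(\alpha_N,\alpha)+\wass_2(\beta_N,\beta)$, i.e.\ the $\max\{\lambda,1\}[\cdots]$ term.

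For the second bracket, since $\tilde{X}^i=a_i^{-1}\sum_k p^{\mathrm{OT}}_{\epsilon,i,k}X^k$ is a convex average, the $1$-Lipschitzness of $\psi$, Cauchy--Schwarz (with weights $a_i$) and Jensen's inequality give
\[
\bigl|(\mathbf{T}_\#\alpha_N)(\psi)-\tilde{\beta}_N(\psi)\bigr|\le\Bigl(\sum_i a_i|\mathbf{T}(Y^i)-\tilde{X}^i|^2\Bigr)^{1/2}\le\Bigl(\int|\mathbf{T}(y)-x|^2\,\pne(\rd y,\rd x)\Bigr)^{1/2}\eqqcolon A^{1/2}.
\]
Everything thus reduces to bounding the transport discrepancy $A$ between the entropic plan and the graph of $\mathbf{T}$. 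To this end I would invoke Brenier's theorem to write $\mathbf{T}=\nabla\phi$ with $\phi$ convex; since $\mathbf{T}$ is $\lambda$-Lipschitz, $\phi$ is $\lambda$-smooth and its conjugate $\phi^*$ is $1/\lambda$-strongly convex. Strong convexity yields the pointwise inequality $|\mathbf{T}(y)-x|^2\le 2\lambda\bigl(\phi(y)+\phi^*(x)-\langle y,x\rangle\bigr)$, valid for all $y,x$. Integrating against $\pne$ and using that its marginals are $\alpha_N,\beta_N$ gives $A\le 2\lambda\,G$ with $G\coloneqq\alpha_N(\phi)+\beta_N(\phi^*)-\int\langle y,x\rangle\,\pne(\rd y,\rd x)$. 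As $(\phi,\phi^*)$ are Brenier potentials for $(\alpha,\beta)$, strong duality gives $\alpha(\phi)+\beta(\phi^*)=\max_{\gamma\in\Pi(\alpha,\beta)}\int\langle y,x\rangle\,\rd\gamma$, so $G$ is exactly the \emph{suboptimality} of $\pne$, which I would split into an entropic part and a discretization part.

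The two parts are handled by the two cited inputs. The entropic part is controlled by the fact that, among couplings supported on the $N\times N$ grid, the entropic and unregularized optimal plans have transport costs differing by at most $\epsilon\,\mathrm{KL}(\mathbf{P}^{\mathrm{OT}}\,\|\,\mathbf{a}\otimes\mathbf{b})\le\epsilon\bigl(H(\mathbf{a})+H(\mathbf{b})\bigr)\le 2\epsilon\log N$ (building on \citet{weed2018explicit}); taking a square root is what injects the $\sqrt{2\epsilon\log N}$ appearing in $\mathcal{E}$. The discretization part replaces $(\alpha_N,\beta_N)$ by $(\alpha,\beta)$ and is controlled, via the quantitative stability of optimal maps \citep{li2020quantitative}, by $\wass_2(\alpha_N,\alpha)+\wass_2(\beta_N,\beta)$ together with the diameter $\mathfrak{d}$, which enters through the magnitude of $|\mathbf{T}(y)-x|$ and the Lipschitz constants of $\phi,\phi^*,|\cdot|^2$ on $\mathcal{X}$; an interpolation step such as $|\mathbf{T}(y)-x|^2\le\mathfrak{d}\,|\mathbf{T}(y)-x|$ is what converts the quadratic discrepancy into the stated $\mathfrak{d}^{1/2}$ scaling after taking $A^{1/2}$.

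The main obstacle is precisely this last reconciliation: the Monge map $\mathbf{T}$ exists only for the continuous pair $(\alpha,\beta)$, whereas the plan $\pne$ lives on the discrete pair $(\alpha_N,\beta_N)$, so the delicate step is to transfer the near-optimality of $\pne$ onto $\mathbf{T}$ and track how $\mathfrak{d}$ and the entropic penalty propagate through the suboptimality $G$. The advantage of the Brenier/duality route is exactly that it makes the (potentially large) true transport distance $\wass_2(\alpha,\beta)$ cancel through strong duality, which a naive gluing of couplings would instead introduce; assembling the resulting bounds on $G$, feeding them into $A\le 2\lambda G$, and combining with the first bracket yields \eqref{eq:UpperBoundonDET}, with the precise power of $\mathfrak{d}$ and the numerical constants being the remaining book-keeping.
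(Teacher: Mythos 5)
Your reduction is correct as far as it goes, and its skeleton coincides with the paper's: the paper also bounds $|\beta_N(\psi)-\tilde{\beta}_N(\psi)|$ by $A^{1/2}$ with $A\coloneqq\int\|\mathbf{T}(x)-\tilde{x}\|^2\,\pne(\rd x,\rd \tilde{x})$, using exactly your Cauchy--Schwarz/Jensen step together with the fact that $\tilde{X}^i$ is the conditional mean of $\pne(\cdot\mid Y^i)$. (The paper's version is leaner: since $\beta_N$ is itself the second marginal of $\pne$, no detour through the intermediate measure $\sum_i a_i\delta_{\mathbf{T}(Y^i)}$ is needed, and both terms of \eqref{eq:UpperBoundonDET} are extracted from $A^{1/2}$ alone.) Your entropic accounting is also sound: the KL/entropy argument giving (cost of $\pne$) $\le \wass_2^2(\alpha_N,\beta_N)+2\epsilon\log N$ is equivalent to the paper's combination of Proposition~4 of \cite{weed2018explicit} with the entropic-radius bound $R_H\le 2\log N$ of Lemma~\ref{lem:entropic_radius}, and your Fenchel--Young/strong-convexity inequality $|\mathbf{T}(y)-x|^2\le 2\lambda\left(\phi(y)+\phi^*(x)-\langle y,x\rangle\right)$ is valid.

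The genuine gap is the stability bound on $A$. The paper does not prove this estimate; it imports it wholesale as Corollary~3.8 of \cite{li2020quantitative} (this is Lemma~\ref{lem:linochetto}), and that corollary is precisely the statement your Brenier/duality sketch sets out to reprove. Your sketch stops at the decisive step: after $A\le 2\lambda G$, you need $G\lesssim \mathcal{E}\left(\mathfrak{d}^{1/2}+\mathcal{E}\right)$, and neither device you propose delivers it. First, controlling the discretization part of $G$ through the Lipschitz constants of the potentials gives the wrong diameter dependence: with $u=\tfrac12|\cdot|^2-\phi$ one has $\nabla u(y)=y-\mathbf{T}(y)$, so $\mathrm{Lip}(u)$ is of order $\mathfrak{d}$, and the resulting error term is $\mathfrak{d}\left[\wass_2(\alpha_N,\alpha)+\wass_2(\beta_N,\beta)\right]$ rather than $\mathfrak{d}^{1/2}\left[\cdots\right]$ --- strictly weaker than \eqref{eq:UpperBoundonDET} once $\mathfrak{d}>1$. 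In the paper the $\mathfrak{d}^{1/2}$ arises differently, by bounding $\wass_2(\alpha,\beta)\le \mathfrak{d}^{1/2}$ inside the Li--Nochetto bound, and obtaining that form requires the coupling/gluing argument of their proof, not a potential-Lipschitz estimate. Second, the proposed interpolation $|\mathbf{T}(y)-x|^2\le \mathfrak{d}\,|\mathbf{T}(y)-x|$ is vacuous here: fed back through Cauchy--Schwarz it only returns $A\le \mathfrak{d}^2$. Finally, note that your decomposition actually demands a \emph{stronger} bound than the cited corollary provides: you must bound $A^{1/2}$ by the first term of \eqref{eq:UpperBoundonDET} alone, whereas Corollary~3.8 bounds $A^{1/2}$ by both terms (the additive $\lambda\wass_2(\alpha_N,\alpha)+\wass_2(\beta_N,\beta)$ is part of its right-hand side). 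So even granting that corollary, your split yields \eqref{eq:UpperBoundonDET} only with $2\max\{\lambda,1\}$ in place of $\max\{\lambda,1\}$. The clean fix is to drop the first bracket entirely and apply Corollary~3.8 together with the entropic bound directly to $A^{1/2}$, exactly as the paper does in Lemma~\ref{lem:linochetto}.
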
  
If $\wass_2(\alpha_N, \alpha), \wass_2(\beta_N, \beta)\to 0$ and we choose $\epsilon_N=o(1/\log N)$ the bound given in (\ref{eq:UpperBoundonDET}) vanishes with $N\to \infty$.
This suggested dependence of $\epsilon$ on $N$ comes from the entropic radius, see Lemma~\ref{lem:entropic_radius} in the Supplementary and \cite{weed2018explicit}, and is closely related to the fact that entropy-regularized OT is sensitive to the scale of $\mathbf{X}$. Equivalently one may rescale  $\mathbf{X}$ by a factor $\log N$ when computing the cost matrix.
In particular when $\alpha_N$ and $\beta_N$ are Monte Carlo approximations of $\alpha$ and $\beta$, we expect $\wass_2(\alpha_N, \alpha), \wass_2(\beta_N, \beta) =O(N^{-1/d})$ with high probability \cite{fournier2015rate}.

\subsection{Consistency of DPF}\label{subsec:consistencyDPF} The parameters $\theta,\phi$ are here fixed and omitted from notation. We now establish consistency results for DPF, showing that both the resulting particle approximations $\tilde{\beta}^{(t)}_N=\frac{1}{N}\sum_{i=1}^N\delta_{\tilde{X}^{i}_t}$ of $\beta^{(t)}=p(x_t|y_{1:t})$ and the corresponding log-likelihood approximation $\log \hat{p}_N(y_{1:T})$ of  $\log p(y_{1:T})$ are consistent. In the interest of simplicity, we limit ourselves to the scenario where the proposal is the transition, $q=f$, so $\omega(x_{t-1},x_t,y_t)=g(y_t|x_t)$, known as the bootstrap PF and study a slightly non-standard version of it proposed in \cite{del2001stability}; see
\cref{sec:ProofconsistencyDPF} for details. Consistency is established under regularity assumptions detailed in the Supplementary. Assumption~\ref{ass:compact} is that the space $\mathcal{X} \subset \mathbb{R}^d$ has a finite diameter $\mathfrak{d}$. Assumption~\ref{ass:lipcontraction} implies that the proposal mixes exponentially fast in the Wasserstein sense at a rate $\kappa$, which is reasonable given compactness, and essential for the error to not accumulate. Assumption~\ref{ass:omega} assumes a bounded importance weight function i.e. $g(y_t|x_t)\in [\Delta,\Delta^{-1}]$, again not unreasonable given compactness. Assumption~\ref{ass:lipschitz} states that at each time step, the optimal transport problem between $\alpha^{(t)}$ and $\beta^{(t)}$ is solved uniquely by a deterministic, globally Lipschitz map. Uniqueness is crucial for the quantitative stability results provided in the following proposition.


\begin{proposition}\label{prop:bias}
Under Assumptions \ref{ass:compact}, \ref{ass:lipcontraction}, \ref{ass:omega} and \ref{ass:lipschitz}, for any $\delta>0$, with probability at least $1-2\delta$ over the sampling steps, for any bounded 1-Lipschitz $\psi$, for any $t\in [1:T]$,  the approximations of the filtering distributions and log-likelihood computed by the bootstrap DPF satisfy
\begin{align*}
                |\tilde{\beta}^{(t)}_{N}(\psi)-\beta^{(t)}(\psi)
               | &\leq  \mathfrak{G}_{\epsilon, \delta/T, N, d}^{(t)}\left(\lambda(c,C, d,T,N,\delta)\right),
\end{align*}
\begin{align*}
        \left|
         \log \frac{\hat{p}_N(y_{1:T})}{p(y_{1:T})}\right|  &\leq \frac{\kappa}{\Delta} \max_{t\in [1:T]}\mathrm{Lip}\left[g(y_t\mid \cdot) \right]\\
    & \times \sum_{t=1}^T  \mathfrak{G}_{\epsilon, \delta/T, N, d}^{(t)}\left(\lambda(c,C,d,T,N,\delta)\right),
\end{align*}
for $\lambda(c,C,d,T,N,\delta)=\sqrt{f_d^{-1} \left(\frac{\log(CT/\delta)}{cN} \right)}$ where $c,C$ are finite constants independent of $T,$ and $\mathrm{Lip}[f]$ is the Lipschitz constant of the function $f,$ and $\mathfrak{G}^{(t)}_{N, \epsilon}, f_d$ defined in Appendix \ref{sec:ProofconsistencyDPF} are two functions such that if we set $\epsilon_N=o(1/\log N)$ then we have in probability
    \begin{equation*}
      |\tilde{\beta}^{(t)}_{N}(\psi)-\beta^{(t)}(\psi)|  \to 0,\quad
      \left| \log \frac{\hat{p}_N(y_{1:T})}{p(y_{1:T})}\right| \to 0.
    \end{equation*}
\end{proposition}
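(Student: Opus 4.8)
The plan is to establish the filtering-distribution bound first by an induction on $t$, then derive the log-likelihood bound as a telescoping consequence. The central engine is Proposition~\ref{prop:CVDETDetailed}, which controls the error introduced by a single DET step in terms of the Wasserstein distances $\wass_2(\alpha_N,\alpha)$, $\wass_2(\beta_N,\beta)$ and the entropic term $\sqrt{2\epsilon\log N}$. The induction hypothesis at time $t$ asserts that the DPF particle approximation $\tilde\beta^{(t)}_N$ is close (in the sense of integrating $1$-Lipschitz test functions, equivalently in $\wass_1$) to the true filter $\beta^{(t)}=p(x_t\mid y_{1:t})$ with the stated high-probability bound. I would carry out the inductive step in three sub-stages mirroring one iteration of the bootstrap filter: (i) the mutation step, pushing $\tilde\beta^{(t-1)}_N$ through the transition $f=q$, which under Assumption~\ref{ass:lipcontraction} contracts the Wasserstein error by the mixing rate $\kappa$; (ii) the reweighting step by $g(y_t\mid\cdot)$, which under Assumption~\ref{ass:omega} (weights bounded in $[\Delta,\Delta^{-1}]$) and the Lipschitz bound on $g$ distorts the measure by a controlled multiplicative factor; and (iii) the DET resampling step, to which I apply Proposition~\ref{prop:CVDETDetailed} with $\alpha_N$ the empirical proposal measure $\alpha^{(t)}_N$ and $\beta_N$ the reweighted measure $\beta^{(t)}_N$, invoking Assumption~\ref{ass:lipschitz} to guarantee the existence of the unique $\lambda$-Lipschitz optimal map $\mathbf{T}^{(t)}$ required by that proposition.

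**Next I would** supply the Monte Carlo inputs to Proposition~\ref{prop:CVDETDetailed}. The terms $\wass_2(\alpha^{(t)}_N,\alpha^{(t)})$ and $\wass_2(\beta^{(t)}_N,\beta^{(t)})$ are bounded in two pieces: a sampling error from drawing $N$ fresh particles, controlled with high probability using a concentration-of-empirical-measure estimate (this is the source of $f_d$ and the rate $\lambda(c,C,d,T,N,\delta)=\sqrt{f_d^{-1}(\log(CT/\delta)/(cN))}$, following \cite{fournier2015rate, weed2018explicit}), and a propagated error from the previous time step carried by the induction hypothesis and damped by the contraction factor $\kappa$ from Assumption~\ref{ass:lipcontraction}. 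Taking a union bound over the $T$ sampling steps, each at confidence $1-\delta/T$, yields the global $1-2\delta$ event (the factor $2$ accommodating both $\alpha_N$ and $\beta_N$ errors). Collecting these bounds into the composite function $\mathfrak{G}^{(t)}_{\epsilon,\delta/T,N,d}$ gives exactly the first displayed inequality. The compactness Assumption~\ref{ass:compact} (finite diameter $\mathfrak{d}$) is what makes $\mathfrak{d}$ in Proposition~\ref{prop:CVDETDetailed} finite and lets all constants be uniform in $t$.

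**For the log-likelihood bound**, I would write $\log\hat p_N(y_{1:T})-\log p(y_{1:T})$ as a telescoping sum over $t$ of the differences between the DPF one-step normalizing-constant estimate $\hat p_N(y_t\mid y_{1:t-1})=\tilde\beta^{(t-1)}_N(f\ast g)$-type average and its true counterpart $p(y_t\mid y_{1:t-1})=\beta^{(t-1)}(\cdot)$. Each summand is the error of integrating the function $x\mapsto g(y_t\mid x)$ (up to the transition) against the approximate versus the true measure. Since $g(y_t\mid\cdot)$ is Lipschitz with constant $\mathrm{Lip}[g(y_t\mid\cdot)]$, rescaling to a $1$-Lipschitz test function and applying the just-proved filtering bound produces a per-step error of order $\mathrm{Lip}[g(y_t\mid\cdot)]\,\mathfrak{G}^{(t)}$; the lower bound $\Delta$ on the weights converts the difference of logarithms into a difference of the normalizing constants via $|\log a-\log b|\le |a-b|/\min(a,b)$, contributing the $1/\Delta$ factor, and the contraction rate $\kappa$ enters through the mutation step, giving the prefactor $(\kappa/\Delta)\max_t\mathrm{Lip}[g(y_t\mid\cdot)]$ in front of $\sum_t \mathfrak{G}^{(t)}$. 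Finally the convergence-to-zero claim is immediate: with $\epsilon_N=o(1/\log N)$ the entropic term $\sqrt{2\epsilon_N\log N}\to 0$, the Monte Carlo radius $\lambda(c,C,d,T,N,\delta)\to 0$ as $N\to\infty$, and $\max\{\lambda,1\}$ multiplying the vanishing Wasserstein terms in Proposition~\ref{prop:CVDETDetailed} keeps $\mathfrak{G}^{(t)}\to 0$ in probability.

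**The main obstacle** I anticipate is the DET resampling step (iii): Proposition~\ref{prop:CVDETDetailed} requires a \emph{unique} globally Lipschitz optimal transport map between $\alpha^{(t)}$ and $\beta^{(t)}$, which is precisely why Assumption~\ref{ass:lipschitz} is imposed, but verifying that the empirical measures fed into the DET stay in the regime where the proposition's hypotheses hold—and, more delicately, ensuring the DET error does \emph{not} compound uncontrollably across the $T$ steps—is the crux. The contraction Assumption~\ref{ass:lipcontraction} is the key that prevents error accumulation: without the geometric damping by $\kappa<1$, the errors from the $T$ successive DET and sampling steps would add up linearly (or worse) and the bound would degrade with $T$. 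Reconciling the non-unbiasedness of the DET (it satisfies \eqref{resamplingunbiased} only for affine $\psi$) with the need for a stable recursion—so that the bias introduced at each step is genuinely of the size controlled by Proposition~\ref{prop:CVDETDetailed} rather than silently accumulating—is where the careful bookkeeping, and the real content of assembling $\mathfrak{G}^{(t)}$, will lie.
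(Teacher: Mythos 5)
Your proposal follows essentially the same route as the paper's proof: a one-step recursive bound combining the mutation contraction (Assumption~\ref{ass:lipcontraction}), the reweighting control from bounded Lipschitz weights (Assumption~\ref{ass:omega}), and the DET error via Lemma~\ref{lem:linochetto} (the engine behind Proposition~\ref{prop:CVDETDetailed}), iterated over $t$ as in Proposition~\ref{prop:recursivebound}, with \cite{fournier2015rate} concentration at level $\delta/T$ and a union bound, and the identical telescoping argument with $|\log a-\log b|\le|a-b|/\min(a,b)$ for the likelihood. Two small corrections to your bookkeeping: first, the factor $2$ in $1-2\delta$ does \emph{not} come from ``both $\alpha_N$ and $\beta_N$ errors''---the reweighting error $\wass_2(\beta_N,\beta)$ is deterministic given the particles and needs no probabilistic control; rather, one $\delta$ comes from the union bound over the $T$ mutation steps and the other from a separate concentration event for the initial sample $\mu_N$. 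Second, the Fournier--Guillin bound requires i.i.d.\ draws from a common measure, which is precisely why the analyzed algorithm samples $X_t^j$ i.i.d.\ from the mixture $\tilde{\beta}_N^{(t-1)}f$ (the modified bootstrap of \cite{del2001stability}) rather than conditionally on each ancestor; your plan should invoke this explicitly, as the standard per-ancestor sampling produces independent but non-identically-distributed particles to which the theorem does not directly apply.
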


The above bounds are certainly not sharp. A glimpse into the behavior of the above bounds in terms of $T$ can be obtained through careful consideration of the quantities appearing in Proposition~\ref{prop:recursivebound} in the supplement.
In particular, for $\kappa$ small enough, it suggests that the bound on the error of the log-likelihood estimator grows linearly with $T$ as for standard PF under mixing assumptions. Sharper bounds are certainly possible, e.g. using a $L_1$ version of Theorem~3.5 in \cite{li2020quantitative}. It would also be of interest to weaken the assumptions, in particular, to remove the bounded space assumption although it is very commonly made in the PF literature to obtain quantitative bounds; see e.g. \cite{delmoral2004,douc2014nonlinear}. Although this is not made explicit in the expressions above, there is an exponential dependence of the bounds on the state dimension $d_x$. This is unavoidable however and a well-known limitation of PF methods.

Finally note that DPF provides a biased estimate of the likelihood contrary to standard PF, so we cannot guarantee that the expectation of its logarithm, $\ell_{\epsilon}^{\mathrm{ELBO}}(\theta,\phi)\coloneqq\mathbb{E}_{\mathbf{U}}[\hat{\ell}_{\epsilon}(\theta;\phi,\mathbf{U})]$. is actually a valid ELBO. However in all our experiments, see e.g. \cref{subsec:LGM}, $|\ell^{\mathrm{ELBO}}_{\epsilon}(\theta,\phi) -\ell^{\mathrm{ELBO}}(\theta,\phi)|$ is significantly smaller than $\ell(\theta)-\ell^{\mathrm{ELBO}}(\theta,\phi)$ so $\ell^{\mathrm{ELBO}}_{\epsilon}(\theta,\phi)<\ell(\theta)$. Hence we keep the ELBO terminology.

\section{Experiments}\label{sec:experiments}
In \cref{subsec:LGM}, we assess the sensitivity of the DPF to the regularization parameter $\epsilon$. All other DPF experiments presented here use the DET Resampling detailed in \cref{algo:differentiableResampling} with $\epsilon=0.5$, which ensures stability of the gradient calculations while adding little bias to the calculation of the ELBO compared to standard PF. Our method is implemented in both PyTorch and TensorFlow, the code to replicate the experiments as well as further experiments may be found at \url{https://github.com/JTT94/filterflow}.

 \subsection{Linear Gaussian State-Space Model}\label{subsec:LGM}
We consider here a simple two-dimensional linear Gaussian SSM for which the exact likelihood can be computed exactly using the Kalman filter
\begin{align*}
    X_{t+1}|\{ X_{t}=x \}&\sim \mathcal{N} \left(\text{diag}(\theta_1~\theta_2) x, 0.5 \mathbf{I}_2\right),\\ Y_t|\{ X_t=x\}&\sim \mathcal{N}(x, 0.1 \mathbf{I}_2).
\end{align*}
We simulate $T=150$ observations using $\theta=(\theta_1,\theta_2)=(0.5,0.5)$, for which we evaluate the ELBO at $\theta=(0.25,0.25)$, $\theta=(0.5,0.5)$, and $\theta=(0.75,0.75)$. More precisely, using a standard PF with $N=25$ particles, we compute the mean and standard deviation of $\frac{1}{T}(\hat{\ell}(\theta;\mathbf{U})-\ell(\theta))$ over 100 realizations of $\bf{U}$. The mean is an estimate of the ELBO minus the true log-likelihood (rescaled by $1/T$). We then perform the same calculations for the DPF using the same number of particles and $\epsilon=0.25,0.5,0.75$. As mentioned in \cref{sec:differentiable_et} and \cref{subsec:consistencyDPF}, the DET resampling scheme is only satisfying \cref{resamplingunbiased} for affine functions $\psi$ so the DPF provides a biased estimate of the likelihood. Hence we cannot guarantee that the expectation of the corresponding log-likelihood estimate is a true ELBO. However, from \cref{tab:estimatesVariance}, we observe that the difference between the ELBO estimates computed using PF and DPF is negligible for the three values of $\epsilon$. The standard deviation of the log-likelihood estimates is also similar.
\vspace{-0.5cm}
\begin{table}[H]
    \centering
    \caption{Mean \& std of $\frac{1}{T} (\hat{\ell}(\theta;\mathbf{U})-\ell(\theta))$}
    

 \begin{tabular}{rrrrr}
 \toprule
   \multicolumn{2}{c}{$\theta_1$, $\theta_2$}           & 0.25  & 0.5   & 0.75 \\
   \midrule
   \multirow{2}{*}{PF}                           & mean & -1.13 & -0.93 & -1.05 \\
                                                 & std  & 0.20  & 0.18  & 0.17 \\
   \midrule
   \multirow{2}{*}{DPF ($\epsilon=0.25$)}        & mean & -1.14 & -0.94 & -1.07 \\
                                                 & std  & 0.20  & 0.18  & 0.19 \\
   \midrule
   \multirow{2}{*}{DPF ($\epsilon=0.5$)}         & mean & -1.14 & -0.94 & -1.08 \\
                                                 & std  & 0.20  & 0.18  & 0.18 \\
   \midrule
   \multirow{2}{*}{DPF ($\epsilon=0.75$)}        & mean & -1.14 & -0.94 & -1.08 \\
                                                 & std  & 0.20  & 0.18  & 0.18 \\
 \bottomrule
 \end{tabular}



\label{tab:estimatesVariance}
\end{table}
Recall here that alternative techniques estimating the score vector $\nabla_\theta \ell(\theta)$ by approximating (\ref{eq:scoreidentity}) using particle smoothing algorithms \cite{poyiadjis2011particle,kantas2015particle} could also be used to estimate $\theta$.

\subsection{Learning the Proposal Distribution}
    We consider a similar example as in \cite{naesseth2017variational} where one learns the parameters $\phi$ of the proposal using the ELBO for the following linear Gaussian SSM:
    \begin{align}\label{eq:proposalLGSS1}
        X_{t+1}|\{X_{t}=x\}\sim \mathcal{N}\left(\mathbf{A}x, \mathbf{I}_{d_x}\right), \\ Y_t|\{ X_t=x \}\sim \mathcal{N}( \mathbf{I}_{d_y, d_x} x, \mathbf{I}_{d_y}),\label{eq:proposalLGSS2}
    \end{align} with $\mathbf{A} = (0.42^{|i-j|+1})_{1 \leq i,j\leq d_x}$, $\mathbf{I}_{d_y, d_x}$ is a $d_y\times d_x$ matrix with $1$ on the diagonal for the $d_y$ first rows and zeros elsewhere. For $\mathbf{\phi} \in \mathbb{R}^{d_x+d_y}$, we consider
    \begin{equation*}
        \label{eq:learned_proposal}
        q_\phi(x_t|x_{t-1}, y_t)=\mathcal{N}(x_t|\Delta_{\phi}^{-1}\left(\mathbf{A} x_{t-1} + \Gamma_{\phi} y_t\right), \Delta_\phi),
    \end{equation*}
    with $\Delta_{\phi} = \text{diag}(\phi_1, \dots, \phi_{d_x})$ and a $d_x \times d_y$ matrix $\Gamma_{\phi} = \text{diag}_{d_x, d_y}(\phi_1, \dots, \phi_{d_x})$ with $\phi_i$ on the diagonal for $d_x$ first rows and zeros elsewhere.
    The locally optimal proposal $p(x_t|x_{t-1},y_t)\propto g(y_t|x_t)f(x_t|x_{t-1})$ in \cite{doucet2009tutorial} corresponds to $\mathbf{\phi} = \mathbf{1}$, the vector with unit entries of dimension $d_\phi=d_x+d_y$.
    
    For $d_x=25,d_y=1$, $M=100$ realizations of $T=100$ observations using (\ref{eq:proposalLGSS1})-(\ref{eq:proposalLGSS2}), we learn $\phi$ on each realization using 100 steps of stochastic gradient ascent with learning rate $0.1$ on the $\ell^{\text{ELBO}}(\phi)$ using regular PF with biased gradients as in \cite{maddison2017filtering,le2017auto,naesseth2017variational} and $\ell^{\text{ELBO}}(\phi)$ with four independent filters using DPF. We use $N=500$ for regular PF and $N=25$ for DPF so as to match the computational complexity. While $p(x_t|x_{t-1},y_t)$ is not guaranteed to maximize the ELBO, our experiments showed that it outperforms optimized proposals. We therefore report the RMSE of $\mathbf{\phi} - \mathbf{1}$ and the average Effective Sample Size (ESS) \cite{doucet2009tutorial} as proxy performance metrics. On both metrics, DPF outperforms regular PF. The RMSE over 100 experiments is 0.11 for DPF vs 0.22 for regular PF while the average ESS after convergence is around 60\% for DPF vs 25\% for regular PF. The average time per iteration was around 15 seconds for both DPF and PF.

    \subsection{Variational Recurrent Neural Network (VRNN)}
    A VRNN is an SSM introduced by \cite{chung2015recurrent} to improve upon LSTMs (Long Short Term Memory networks) with the addition of a stochastic component to the hidden state, this extends variational auto-encoders to a sequential setting. Indeed let latent state be $X_t=(R_t,Z_t)$ where $R_t$ is an RNN state and $Z_t$ a latent Gaussian variable, here $Y_t$ is a vector of binary observations. The VRNN is detailed as follows. $\text{RNN}_{\theta}$ denotes the forward call of an LSTM cell which at time $t$ emits the next RNN state $R_{t+1}$ and output $O_{t+1}$. $E_\theta$, $h_\theta$, $\mu_\theta$, $\sigma_\theta$ are fully connected neural networks; detailed fully in the Supplementary Material. This model is trained on the polyphonic music benchmark datasets \cite{boulangerlew2012modeling}, whereby $Y_t$ represents which notes are active. The observation sequences are capped to length $150$ for each dataset, with each observation of dimension $88$. We chose latent states $Z_t$ and $R_t$ to be of dimension $d_z=8$ and $d_r=16$ respectively so $d_x=24$. We use $q_{\phi}(x_t|x_{t-1},y_t)=f_{\theta}(x_t|x_{t-1})$.
        \begin{align*}
            (R_{t+1}, O_{t+1}) &= \text{RNN}_\theta(R_{t}, Y_{1:t}, E_\theta(Z_{t})),\\
            Z_{t+1} &\sim \mathcal{N}(\mu_{\theta}(O_{t+1}),
             \sigma_{\theta}(O_{t+1})), \\
            \hat{p}_{t+1} &= h_\theta(E_\theta(Z_{t+1}), O_{t+1}), \\ 
             Y_t | X_t  &\sim \text{Ber}(\hat{p}_t).
        \end{align*}
        
    \begin{table}[h]
    \vspace{-0.7cm}
        \caption{ELBO $\pm$ Standard Deviation evaluated using Test Data.}
        \label{tab:music}
        \begin{center}
        \begin{small}
        \begin{sc}
        \begin{tabular}{lcccr}
        \toprule
            & MuseData & JSB   & Nottingham  \\
            \midrule
    {DPF} & $-\mathbf{7.59}_{\pm 0.01 }$   & $-\mathbf{7.67}_{\pm 0.08}$ & $\mathbf{-3.79}_{\pm 0.02}$      \\
    {PF}  & $-7.60_{\pm 0.06}$            & $-7.92_{\pm 0.13}$          & $-3.81_{\pm 0.02}$     \\
    {SPF} & $-7.73_{\pm 0.14}$            & $-8.17_{\pm 0.07}$          & $-3.91_{\pm 0.05}$        \\
            \bottomrule
            \end{tabular}
            \end{sc}
            \end{small}
            \end{center}
            \end{table}
        
    The VRNN model is trained by maximizing $\ell_{\epsilon}^{\text{ELBO}}(\theta)$ using DPF. We compare this to the same model trained by maximizing  $\ell^{\text{ELBO}}(\theta)$ computed with regular PF \cite{maddison2017filtering} and also trained with `soft-resampling' (SPF) introduced by \cite{karkus2018particle} and described in Section \ref{sec:related},
    SPF is used here with parameter $\alpha=0.1$. Unlike regular resampling, SPF partially preserves a gradient through the resampling step, however SPF still involves a non-differentiable operation, again resulting in a biased gradient. SPF also produces higher variance estimates as the resampled approximation is not uniformly weighted, essentially interpolating between PF and IWAE. Each of the methods are performed with $N=32$ particles. Although DET is computationally more expensive than the other resampling schemes, the computational times of DPF, PF, and SPF are very similar due to most of the complexity coming from neural network operations. The learned models are then evaluated on test data using multinomial resampling for comparable ELBO results. Due to the fact that our observation model is $\text{Ber}(\hat{p}_t)$, this recovers the negative log-predictive cross-entropy.
    
     Table \ref{tab:music} illustrates the benefit of using DPF over regular PF and SPF for the JSB dataset. Although DPF remains competitive compared to other heuristic approaches, the difference is relatively minor for the other datasets. We speculate that the performance of the heuristic methods is likely due to low predictive uncertainty for the next observation given the previous one.
    
    
    
    \subsection{Robot Localization}
     Consider the setting of a robot/agent in a  maze \cite{jonschkowski2018differentiable,karkus2018particle}. Given the agent's initial state, $S_1$, and inputs $a_t$, one would like to infer the location of the agent at any specific time given observations $O_t$. Let the latent state be denoted $S_t=(X^{(1)}_t, X^{(2)}_t, \gamma_t)$ where $(X^{(1)}_t,X^{(2)}_t)$ are location coordinates and $\gamma_t$ the robot's orientation. In our setting observations $O_t$ are images, which are encoded to extract useful features using a neural network $E_\theta$, where $Y_t = E_\theta(O_t)$. This problem requires learning the relationship between the robot's location, orientation and the observations. Given actions $a_t=(v^{(1)}_t, v^{(2)}_t, \omega_t)$, we have 
       \begin{align*}
           S_{t+1} &= F_\theta(S_t, a_t) + \nu_t,\quad  \nu_t \overset{\textup{i.i.d.}}{\sim} \mathcal{N}(\mathbf{0}, \Sigma_F),\\
           Y_{t} &= G_\theta(S_t) + \epsilon_t,\quad \epsilon_t \overset{\textup{i.i.d.}}{\sim}\mathcal{N}(\mathbf{0}, \sigma_{G}^2\mathbb{I}_{e_d}),
       \end{align*}
where $\Sigma_F=\textup{diag}(\sigma_x^2,\sigma_x^2,\sigma_\theta^2 )$ and the relationship between state $S_t$ and image encoding $Y_t$ may be parameterized by another neural network $G_\theta$. We consider here a simple linear model of the dynamics
\[
    F(S_t,a_t)= 
\begin{bmatrix}
    X^{(1)}_t + v^{(1)}_t \cos(\gamma_t) + v^{(2)}_t \sin(\gamma_t)\\
    X^{(2)}_t + v^{(1)}_t \sin(\gamma_t) - v^{(2)}_t \cos(\gamma_t)\\
    \gamma_t + \omega_t
\end{bmatrix}.
\]
$D_\theta$ denotes a decoder neural network, mapping the encoding back to the original image. $E_\theta$, $G_\theta$ and $D_\theta$ are trained using a loss function consisting of the PF-estimated log-likelihood $\hat{\mathcal{L}}_\mathrm{PF}$; PF-based mean squared error (MSE), $\hat{\mathcal{L}}_\mathrm{MSE}$; and auto-encoder loss, $\hat{\mathcal{L}}_\mathrm{AE}$, given per-batch as in \cite{wen2020end}:
    \begin{align*}
        \hat{\mathcal{L}}_\mathrm{MSE} &\coloneqq \frac{1}{T}\sum_{t=1}^T||X^*_t-\sum_{i=1}^Nw^i_tX^i_t||^2,~~ \hat{\mathcal{L}}_\mathrm{PF}:= -\frac{1}{T}\hat{\ell}(\theta),\\
        \hat{\mathcal{L}}_\mathrm{AE} &\coloneqq \sum_{t=1}^T||D_\theta(E_\theta(O_t))- O_t||^2,
    \end{align*}
where $X^{\star}_t$ are the true states available from training data and $\sum_{i=1}^Nw^i_tX^i_t$ are the PF estimates of $\mathbb{E}[X_t|y_{1:t}]$. The auto-encoder / reconstruction loss $\hat{\mathcal{L}}_\mathrm{AE}$ ensures the encoder is informative and prevents the case whereby networks $G_\theta$, $E_\theta$ map to a constant. The PF-based loss terms $\hat{\mathcal{L}}_\mathrm{MSE}$ and $\hat{\mathcal{L}}_\mathrm{PF}$ are not differentiable w.r.t. $\theta$ under traditional resampling schemes.
   \begin{figure}
            \centering
            \includegraphics[width=\linewidth]{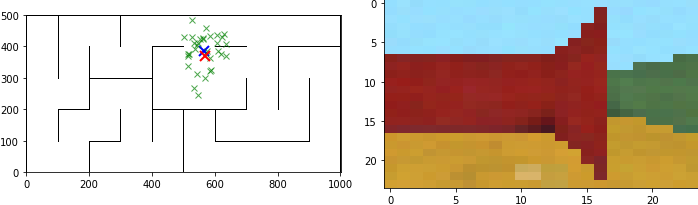}
            \caption{Left: Particles $(X^{(1),i}_t, X^{(2),i}_t)$ (green), PF estimate of $\mathbb{E}[X_t|y_{1:t}]$ (blue), true state $X_t^*$ (red). Right: Observation, $O_t$.}
            \label{fig:map_image}
    \end{figure}
    \begin{table}
        \caption{MSE and $\pm$ Standard Deviation evaluated on Test Data: Lower is better}
        \label{tab:maze_res}
        \vskip 0.15in
        \begin{center}
        \begin{small}
        \begin{sc}
        \begin{tabular}{lcccr}
        \toprule
                 & Maze 1                             & Maze 2                             & Maze 3                             \\
                 \midrule
            DPF  & $\mathbf{3.55}_{\pm \mathbf{0.20}}$  & $\mathbf{4.65}_{\pm 0.50}$  & $\mathbf{4.44}_{\pm \mathbf{0.26}}$  \\
            PF  & $10.71_{\pm 0.45}$ & $11.86_{\pm 0.57}$ & $12.88_{\pm 0.65}$ \\
            SPF & $9.14_{\pm 0.39}$ & $10.12_{\pm \mathbf{0.40}}$                    & $11.42_{\pm 0.37}$   \\ 
            \bottomrule
            \end{tabular}
            \end{sc}
            \end{small}
            \end{center}
            \vskip -0.1in
        \end{table}
        
We use the setup from \cite{jonschkowski2018differentiable} with data from DeepMind Lab \cite{beattie2016deepmind}. This consists of $3$ maze layouts of varying sizes. We have access to `true' trajectories of length $1,000$ steps for each maze. Each step has an associated state, action and observation image, as described above. The visual observation $O_t$ consists of $32\times32$ RGB pixel images, compressed to $24 \times 24$, as shown in Figure \ref{fig:map_image}. Random, noisy subsets of fixed length are sampled at each training iteration. To illustrate the benefits of our proposed method, we select the random subsets to be of length $50$ as opposed to length $20$ as chosen in \cite{jonschkowski2018differentiable}. Training details in terms of learning rates, number of training steps and neural network architectures for $E_\theta$, $G_\theta$ and $D_\theta$ are given in the Appendices.
 
We compare our method, DPF, to regular PF used in \cite{maddison2017filtering} and Soft PF (SPF) used in \cite{karkus2018particle, ma2019particle, ma2020discriminative}, whereby the soft resampling is used with $\alpha=0.1$. As most of the computational complexity arises from neural network operations, DPF is of similar overall computational cost to SPF and PF. As shown in Table \ref{tab:maze_res} and Figure \ref{fig:MSE}, DPF significantly outperforms previously considered PF methods in this experiment. The observation model becomes increasingly important for longer sequences due to resampling and weighting operations. Indeed, as shown in Figure \ref{fig:maps}, the error is small for both models at the start of the sequence, however the error at later stages in the sequence is visibly smaller for the model trained using DPF. 

        \begin{figure}[h]
        \centering
        \begin{subfigure}{0.32\linewidth}
            \centering
            \includegraphics[width=\linewidth]{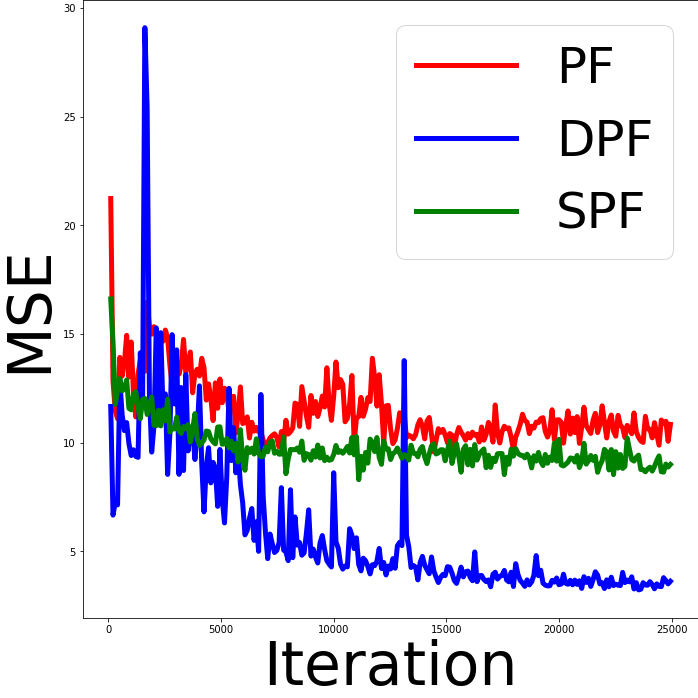}
            \caption{Maze 1}
        \end{subfigure}
        \begin{subfigure}{0.32\linewidth}
            \centering
            \includegraphics[width=\linewidth]{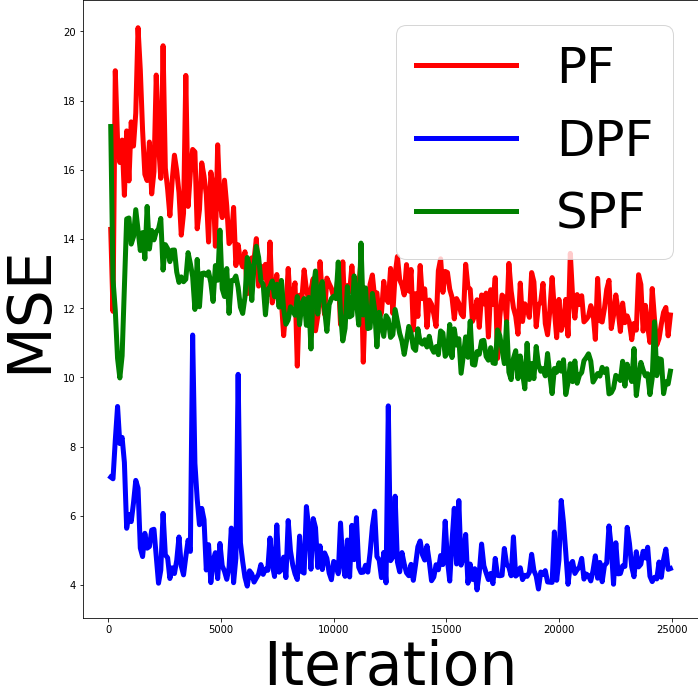}
            \caption{Maze 2}
        \end{subfigure}
        \begin{subfigure}{0.32\linewidth}
            \centering
            \includegraphics[width=\linewidth]{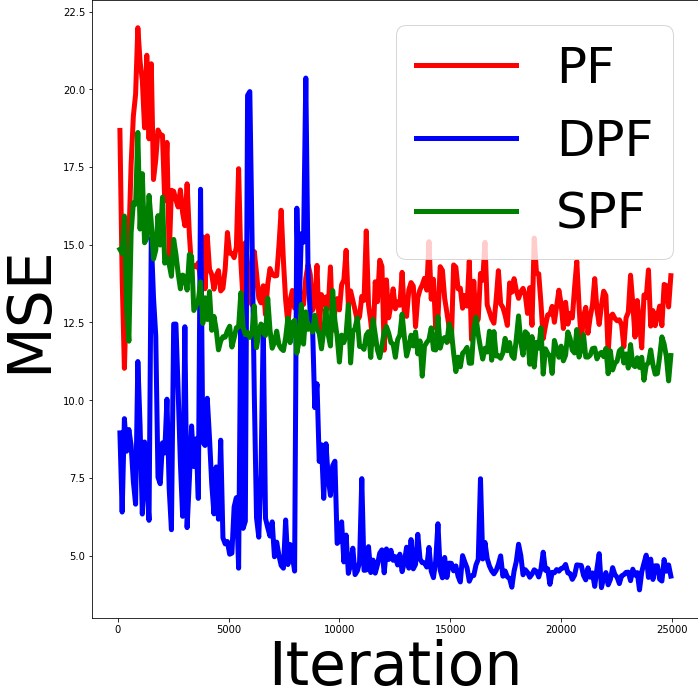}
            \caption{Maze 3}
        \end{subfigure}
        \caption{MSE of PF (red), SPF (green) and DPF (blue) estimates, evaluated on test data during training.}
        \label{fig:MSE}
    \end{figure}
    \begin{figure}[h]
    \centering
        \begin{subfigure}{0.49\linewidth}
            \centering
            \includegraphics[width=\linewidth]{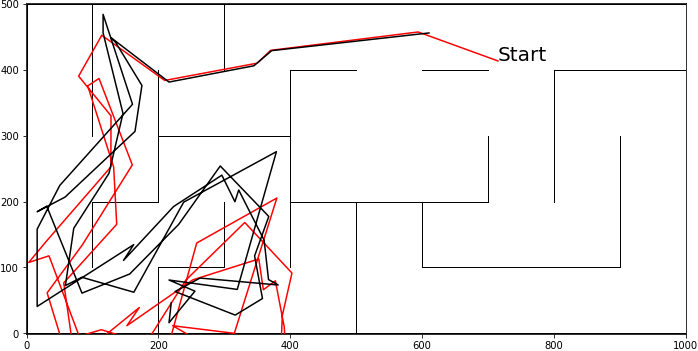}
            \caption{Standard PF}
        \end{subfigure}
        \begin{subfigure}{0.49\linewidth}
            \centering
            \includegraphics[width=\linewidth]{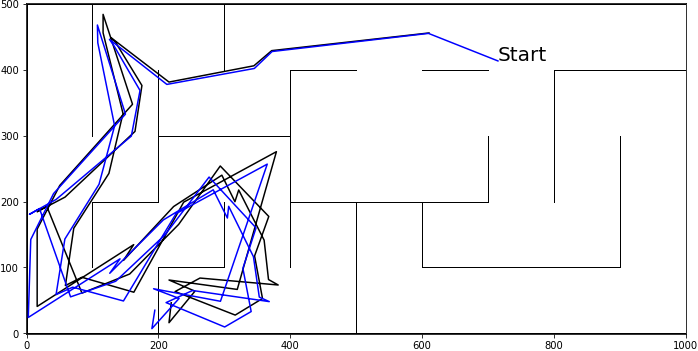}
            \caption{Differentiable PF}
        \end{subfigure}
        \caption{Illustrative Example: PF estimate of path compared to true path (black) on a single $50$-step trajectory from test data.}
        \label{fig:maps}
    \end{figure}

\section{Discussion}

    

This paper introduces the first principled, fully differentiable PF (DPF) which can use general proposal distributions. It provides a differentiable estimate of the log-likelihood function and more generally differentiable estimates of PF-based losses. This permits parameter inference in state-space models and proposal distributions, using end-to-end gradient based optimization. This also allows the use of PF routines in general differentiable programming pipelines, in particular as a differentiable sampling method for inference in probabilistic programming languages \cite{dillon2017tensorflow,hong2018turing, vandemeent2018introduction}.

For a given number of particles $N$, existing PF methods ignoring resampling gradient terms have computational complexity $O(N)$. Training with these resampling schemes however is unreliable and performance cannot be improved by increasing $N$ as gradient estimates are inconsistent and the limiting bias can be significant. DPF has complexity $O(N^2)$ during training. However, this cost is dwarfed when training large neural networks. Additionally, once the model is trained, standard PF may be ran at complexity $O(N)$. The benefits of DPF are confirmed by our experimental results where it was shown to outperform existing techniques, even when an equivalent computational budget was used. Moreover, recent techniques have been proposed to speed up the Sinkhorn algorithm \cite{altschuler2019massively,scetbon2020linear} at the core of DPF and could potentially be used here to reduce its complexity. 

Regularization parameter $\epsilon$ was not fine-tuned in our experiments. In future work, it would be interesting to obtain sharper quantitative bounds on DPF to propose principled guidelines on choosing $\epsilon,$ further improving its performance. Finally, we have focused on the use of the differentiable ensemble transform to obtain a differentiable resampling scheme. However, alternative OT approaches could also be proposed such as a differentiable version of the second order ET presented in \cite{acevedo2017second}, techniques based on point cloud optimization \cite{cuturidoucet2014,peyr2019computational} relying on the Sinkhorn divergence \cite{genevay2018learning} or the sliced-Wasserstein metric. Alternative non-entropic regularizations, such as the recently proposed Gaussian smoothed OT \cite{goldfeld2020gaussiansmooth}, could also lead to DPFs of interest.

\section*{Acknowledgments}
Adrien Corenflos was supported by the Academy of Finland (projects 321900 and 321891). Arnaud Doucet is supported by the EPSRC CoSInES (COmputational Statistical INference for Engineering and Security) grant EP/R034710/1, James Thornton by the OxWaSP CDT through grant EP/L016710/1. Computing resources were provided through the Google Cloud Research Credits Programme. The authors thank Valentin De Bortoli and Yuyang Shi for their comments and Laurence Aitchison for bringing attention to reference \cite{aitchison2018tensor}.

\bibliography{main.bib}
\bibliographystyle{icml2021}

\newpage

\onecolumn

\appendix

\section{Proof of Proposition \ref{prop:asymptoticELBOgradient}}
    \label{sec:ELBOgradientbias}
     A particle filter with multinomial resampling is defined by the following joint distribution
    \begin{align*}
        \overline{q}_{\theta,\phi}(x_{1:T}^{1:N},a_{1:T-1}^{1:N}) & =\prod_{i=1}^{N}q_{\phi}\left(x_{1}^{i}\right)\prod_{t=2}^{T}\prod_{i=1}^{N}w_{t-1}^{a_{t-1}^{i}}q_{\phi}\left(x_{t}^{i}|x{}_{t-1}^{a_{t-1}^{i}}\right)
    \end{align*}
    where $a_{t-1}^{i}\in\{1,...,N\}$ is the ancestral index of particle $x_{t}^{i}$ and
    \begin{align*}
        \omega_{\theta,\phi}(x_{1},y_1) & =\frac{p_{\theta}(x_{1},y_{1})}{q_{\phi}(x_{1})},\quad \omega_{\theta,\phi}(x_{t-1},x_{t},y_t)=\frac{p_{\theta}(x_{t},y_{t}|x_{t-1})}{q_{\phi}\left(x_{t}|x_{t-1}\right)}.
    \end{align*}
    Finally, we have $w_{t}^{i}\propto \omega_{\theta,\phi}(x_{t-1}^{a_{t-1}^{i}},x_{t}^{i},y_t),~\sum_{i=1}^{N}w_{t}^{i}=1$. We do not emphasize notationally that the weights $w_{t-1}^{a_{t-1}^{i}}$ are $\theta,\phi$ and observations dependent.

    The ELBO is given by 
    \begin{align*}
        \ell^{\text{ELBO}}(\theta,\phi)= & \mathbb{E}_{\overline{q}_{\theta,\phi}}
        \left[\log\widehat{p}_{\theta}(y_{1:T})
        \right]=\mathbb{E}_{\overline{q}_{\theta,\phi}}\left[
        \log\left(
            \frac{1}{N}\sum_{i=1}^{N}\omega_{\theta,\phi}(X_{1}^{i},y_1)
        \right)
        +\sum_{t=2}^{T}\log
            \left(
            \frac{1}{N}\sum_{i=1}^{N}\omega_{\theta,\phi}
                (X_{t-1}^{A_{t-1}^{i}},X_{t}^{i},y_t)
            \right)
        \right].
    \end{align*}
    We now compute $\nabla_{\theta}\ell^{\text{ELBO}}(\theta,\phi)$. We assume from now on that the regularity conditions allowing us to swap the expectation and differentiation operators are satisfied as in \cite{maddison2017filtering,le2017auto,naesseth2017variational}. We can split the gradient using the product rule and apply the log-derivative trick:

    \begin{align}
    \nabla_{\theta}\ell^{\text{ELBO}}(\theta,\phi)  =& \mathbb{E}_{\overline{q}_{\theta,\phi}}\left[\nabla_{\theta}\log\widehat{p}_{\theta}(y_{1:T})\right]
     +\mathbb{E}_{\overline{q}_{\theta,\phi}}\left[ \log\widehat{p}_{\theta}(y_{1:T})  \nabla_{\theta}\log\overline{q}_{\theta,\phi}(X_{1:T}^{1:N},A_{1:T-1}^{1:N}) \right]\nonumber\\
      =& \mathbb{E}_{\overline{q}_{\theta,\phi}}\left[\nabla_{\theta}\log\left(\frac{1}{N}\sum_{i=1}^{N}\omega_{\theta,\phi}(X_{1}^{i},y_1)\right)+\sum_{t=2}^{T}\nabla_{\theta}\log\left(\frac{1}{N}\sum_{i=1}^{N}\omega_{\theta,\phi}(X_{t-1}^{A_{t-1}^{i}},X_{t}^{i},y_t)\right)\right]\label{eq:first_part_grad}\\
     & +\mathbb{E}_{\overline{q}_{\theta,\phi}}\left[ \log\widehat{p}_{\theta}(y_{1:T}) \left\{ \sum_{t=2}^{T}\sum_{i=1}^{N}\nabla_{\theta}\log w_{t-1}^{A_{t-1}^{i}}\right\} \right] \label{eq:second_part_grad}
    \end{align}

    For the first part of the ELBO gradient (\ref{eq:first_part_grad}), we have
    
    \begin{align*}
        \nabla_{\theta}\log\left(\frac{1}{N}\sum_{i=1}^{N} \omega_{\theta,\phi}(X_{1}^{i}),y_1\right) & =\sum_{i=1}^{N} w_{1}^{i}\nabla_{\theta}\log w_{\theta,\phi}(X_{1}^{i},y_1)=\sum_{i=1}^{N}w_{1}^{i}\nabla_{\theta}\log p_{\theta}(X_{1}^{i},y_{1})
    \end{align*}
    and
    \begin{align*}
        \nabla_{\theta}\log\left(\frac{1}{N}\sum_{i=1}^{N}\omega_{\theta,\phi}(X_{t-1}^{A_{t-1}^{i}},X_{t}^{i},y_t)\right) & =\sum_{i=1}^{N}w_{t}^{i}\nabla_{\theta}\log \omega_{\theta,\phi}(X_{t-1}^{A_{t-1}^{i}},X_{t}^{i},y_t)=\sum_{i=1}^{N}w_{t}^{i}\nabla_{\theta}\log p_{\theta}(X_{t}^{i},y_{t}|X_{t-1}^{A_{t-1}^{i}}).
    \end{align*}
    This gives
    \begin{align}
        \hspace{-2cm}\nabla_{\theta}\ell^{\text{ELBO}}(\theta,\phi) &=\mathbb{E}_{\overline{q}_{\theta,\phi}}\left[\sum_{i=1}^{N}w_{1}^{i}\nabla_{\theta}\log p_{\theta}(X_{1}^{i},y_{1})+\sum_{t=2}^{T}\sum_{i=1}^{N}w_{t}^{i}\nabla_{\theta}\log p_{\theta}(X_{t}^{i},y_{t}|X_{t-1}^{A_{t-1}^{i}})\right]\label{eq:ELBO1}\\
         & +\mathbb{E}_{\overline{q}_{\theta,\phi}}\left[ \log\widehat{p}_{\theta}(y_{1:T}) \left\{ \sum_{t=2}^{T}\sum_{i=1}^{N}\nabla_{\theta}\log w_{t-1}^{A_{t-1}^{i}}\right\} \right].\label{eq:ELBO2}
    \end{align}
    When we ignore the gradient terms due to resampling corresponding to \eqref{eq:ELBO2} as proposed in \cite{naesseth2017variational,le2017auto,maddison2017filtering,HirtDellaportas2019}, we only use an unbiased estimate of the first term  \eqref{eq:ELBO1}, i.e.
    \begin{equation}\label{eq:biasedgradientELBO}
        \hat{\nabla}_{\theta}\ell^{\textup{ELBO}}(\theta,\phi)\coloneqq\sum_{i=1}^{N}w_{1}^{i}\nabla_{\theta}\log p_{\theta}(X_{1}^{i},y_{1})+\sum_{t=2}^{T}\sum_{i=1}^{N}w_{t}^{i}\nabla_{\theta}\log p_{\theta}(X_{t}^{i},y_{t}|X_{t-1}^{A_{t-1}^{i}}),\quad\text{where~} (X^{1:N}_{1:T}, A^{1:N}_{1:T-1})\sim \overline{q}_{\theta,\phi}(\cdot).
    \end{equation}
    Now we assume that the mild assumptions ensuring almost sure convergence of the PF estimates are satisfied (see e.g. \cite{delmoral2004}). Under these assumptions, the estimator (\ref{eq:biasedgradientELBO}) converges almost surely as $N\rightarrow\infty$ towards
    \begin{align}\label{eq:asymptoticbiasgradient}
        \int & \nabla_{\theta}\log p_{\theta}(x_{1},y_{1})p_{\theta}(x_{1}|y_{1})\mathrm{d}x_{1}+\sum_{t=2}^{T}\int\nabla_{\theta}\log p_{\theta}(x_{t},y_{t}|x_{t-1})p_{\theta}(x_{t-1:t}|y_{1:t-1})\mathrm{d}x_{t-1:t}.
    \end{align}
    Under an additional uniform integrability condition on $\hat{\nabla}_{\theta}\ell^{\textup{ELBO}}(\theta,\phi)$, we thus have that  $\mathbb{E}_{\overline{q}_{\theta,\phi}}[\hat{\nabla}_{\theta}\ell^{\textup{ELBO}}(\theta,\phi)]$ converges towards \eqref{eq:asymptoticbiasgradient}. We recall that the true score is given by Fisher's identity and satisfies
    \begin{align*}
        \int & \nabla_{\theta}\log p_{\theta}(x_{1},y_{1})p_{\theta}(x_{1}|y_{1:T})\mathrm{d}x_{1}+\sum_{t=2}^{T}\int\nabla_{\theta}\log p_{\theta}(x_{t},y_{t}|x_{t-1})p_{\theta}(x_{t-1:t}|y_{1:T})\mathrm{d}x_{t-1:t}.
    \end{align*}
    This concludes the proof of Proposition \ref{prop:asymptoticELBOgradient}.

\section{Notation and Assumptions}
\label{subsec:notation}
        
        \subsection{Filtering Notation}
        \label{subsec:filteringnotation}
            Recall $\mathcal{X}=\mathbb{R}^{d_x}$, denote the Borel sets of $\mathcal{X}$ by $\mathcal{B}(\mathcal{X})$ and $\mathcal{P}(\mathcal{X})$ the set of Borel probability measures on $(\mathcal{X},\mathcal{B}(\mathcal{X}))$. In an abuse of notation, we shall use the same notation for a probability measure and its density w.r.t. Lebesgue measure; i.e. $\nu(\mathrm{d}x)=\nu(x)\mathrm{d}x$. We also use the standard notation $\nu(\psi)=\int \psi(x) \nu(x) \mathrm{d}x$ for any test function $\psi$. In the interest of notational clarity, we will remove subscript $\theta,\phi$ where unnecessary in further workings.

            \noindent We denote $\{\alpha^{(t)}\}_{t\geq 0}$ the predictive distributions where $\alpha^{(t)}(x_t) =p(x_{t}|y_{1:{t-1}})$ for $t> 1$ and $\alpha^{(1)}(x_1) = \mu(x_1)$ while $\{\beta^{(t)}\}_{t\geq 1}$ denotes the filtering distributions; i.e.  $\beta^{(t)}(x_t) =p(x_{t}|y_{1:t})$ for $t\geq 1$.\\

          Using this notation, we have 
                \begin{align}
                \label{eq:recurs1}
                 &\alpha^{(t)}(\psi)=\int \psi(x_t) f(x_{t}|x_{t-1})\beta^{(t-1)}(x_{t-1})\mathrm{d}x_{t-1}\mathrm{d}x_{t}\coloneqq\beta^{(t-1)}f(\psi),\\
                 &\beta^{(t)}(\psi) = \frac{ \alpha^{(t)}(g(y_t|\cdot) \psi)}{ \alpha^{(t)}(g(y_t|\cdot))}= \frac{ \beta^{(t-1)}(f(g(y_t|\cdot) \psi))}{ \beta^{(t-1)}(f(g(y_t|\cdot)))}.
                \end{align}
            More generally, for a proposal distribution $q(x_t|x_{t-1},y_t)\neq f(x_{t}|x_{t-1})$ with parameter $\phi\neq\theta$, the following recursion holds
                
                \begin{align}
                \label{eq:filteringoperator}
                 &\beta^{(t)}(\psi) = \frac{ \beta^{(t-1)}(q(\omega_{t} \, \psi))}{ \beta^{(t-1)}(q(\omega_{t}))}
                 \end{align}
                
                \begin{align}
                    \omega_{t}(x_{t-1},x_t)\coloneqq\omega(x_{t-1},x_t,y_t)=\frac{g(y_t|x_t)f(x_t|x_{t-1})}{q(x_t|x_{t-1},y_t)}.
                \end{align}
             To simplify the presentation, we will present the analysis in the scenario where $\phi=\theta$ and $q(x_t|x_{t-1},y_t)=f(x_t|x_{t-1})$ so we will analyze \eqref{eq:recurs1} for which $\omega_{t}(x_{t-1},x_t)=g(y_t|x_t)$. In this case, the particle approximations of $\mu$ is denoted $\mu_N$ and for $t>1$, $\alpha^{(t)}$ and $\beta^{(t)}$ are given by the random measures
            \begin{equation}
            		\alpha^{(t)}_{N}(\psi) = \frac{1}{N} \sum_{i=1}^N\psi(X^{i}_t),\quad \beta_N^{(t)}(\psi) =\sum_{i=1}^N w^{i}_t\psi(X_t^{i}),\quad \tilde{\beta}_{N}^{(t)}(\psi) = \frac{1}{N} \sum_{i=1}^N\psi(\tilde{X}_t^{i}),
            \end{equation}
            where $w^{i}_t \propto g(y_t|X_t^{i})$ with $\sum_{i=1}^N w^{i}_t=1$ and particles are drawn from $X^{i}_t \sim f(\cdot|\tilde{X}^{i}_{t-1})$.

            Here $\beta_N^{(t)}$ denotes the weighted particle approximation of $\beta^{(t)}$ while  $\tilde{\beta}_{N}^{(t)}$ is the uniformly weighted approximation obtained after the DET transformation described in Section \ref{sec:differentiable_et}.

        \subsection{Optimal Transport Notation}
        
        Recall from Section \ref{sec:optimal_transport}, $\mathcal{P}_t^{\text{OT}}$ denotes a transport between $\alpha^{(t)}$ and $\beta^{(t)}$ with accompanying map $\mathbf{T}^{(t)}$. $\mathcal{P}_t^{\text{OT},N}$ denotes an optimal transport between particle approximations $\alpha_N^{(t)}$ and $\beta_N^{(t)}$ with corresponding transport matrix, $\mathbf{P}^{\text{OT}}$ with $i,j$ entry $p^\text{OT}_{i,j}$. To simplify notation, we remove script $t$ when not needed.

        Similarly from Section \ref{sec:diff_ot}, $\mathcal{P}^{\text{OT,N}}_\epsilon$ denotes the regularized transport between $\alpha^{(t)}_N$ and $\beta^{(t)}_N$ with accompanying matrix $\mathbf{P}^{\text{OT}}_\epsilon$ with $i,j$ entry $p^\text{OT}_{\epsilon,i,j}$. Recall $\tilde{\beta}^{(t)}_N=\frac{1}{N} \sum_{i=1}^N\delta_{\tilde{X}^i}$ is the uniformly weighted particle approximation for $\beta^{(t)}$ under the DET, i.e. $\tilde{X}^{i}=\mathbf{T}^{(t)}_{N, \epsilon}(X^i)=\int y\mathcal{P}^{\mathrm{OT},N}_\epsilon(\mathrm{d}y|x^i)$. Note that $\tilde{X}^{i}_{N,\epsilon}$ will be used where necessary to avoid ambiguity when comparing to other resampling schemes.
        
        Recall also for $p>0$:
        \begin{align}\label{eq:Wasserstein_p}
            \wass_p^{p}(\alpha, \beta) &=\min_{\mathcal{P}\in \mathcal{U}(\alpha,\beta)} \mathbb{E}_{(U,V) \sim \mathcal{P}}\big[||U-V||^p\big]
        \end{align} where $\mathcal{U}(\alpha,\beta)$ is the collection of couplings with marginals $\alpha$ and $\beta$.
    
\subsection{Assumptions}
Our results will rely on the following four assumptions.

\begin{assumption}\label{ass:compact}
                $\mathcal{X}\subset \mathbb{R}^d$ is a compact subset with diameter
                $$\mathfrak{d}\coloneqq \sup_{x,y\in \mathcal{X}} |x-y|.$$ 
            \end{assumption}
\vspace{0.5cm}
\begin{assumption}\label{ass:lipcontraction}
                There exists $\kappa\in(0,1)$ such that for any two probability measures $\pi, \rho$ on $\mathcal{X}$ 
                $$\wass_k(\pi f, \rho f) \leq \kappa \wass_k(\pi, \rho), \qquad k=1,2.$$
            \end{assumption}

\vspace{0.5cm}        
\begin{assumption}\label{ass:omega}
                The weight function $\omega^{(t)}:\mathcal{X}\to [\Delta, \Delta^{-1}]$ is 1-Lipschitz for all $t$. 
            \end{assumption}

\vspace{0.5cm}        
\begin{assumption}\label{ass:lipschitz}
                There exists a $\lambda >0$, such that for all $t\geq 0$ the unique optimal transport plan between $\alpha^{(t)}$ and $\beta^{(t)}$ is given by a deterministic, $\lambda$-Lipschitz map $\mathbf{T}^{(t)}$. 
        \end{assumption}


\section{Auxiliary Results and Proof of Proposition \ref{prop:CVDETDetailed} 
} 
We start by establishing a couple of key auxiliary results which will be then used subsequently to establish Proposition \ref{prop:CVDETDetailed}.

\subsection{Auxiliary Results}
As per section \ref{sec:optimal_transport}, let $\mathcal{S}(\alpha_N, \beta_N)$ denote the collection of coupling matrices between $\alpha_N=\sum_{i=1}^N a_i \delta_{Y^i}$ with $a_i>0$ and $\beta_N=\sum_{i=1}^N b_i \delta_{X^i}$. We also denote entropy by $H$ where  $H(\mathbf{P}) = \sum_{i,j} p_{i,j} \log(1/p_{i,j})$ for $\mathbf{P}=(p_{i,j})_{i,j} \in \mathcal{S}(\alpha_N, \beta_N)$.
\begin{lemma}\label{lem:entropic_radius}
    The entropic radius, $R_H$, of simplex $\mathcal{U}(\alpha_N, \beta_N)$ may be bounded above as follows
    $$R_H \coloneqq \max_{\mathbf{P}_1, \mathbf{P}_2\in \mathcal{S}(\alpha_N, \beta_N)} H(\mathbf{P}_1) - H(\mathbf{P}_2) \leq 2\log (N)$$
\end{lemma}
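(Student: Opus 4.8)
The plan is to reduce the claim to two completely elementary properties of the Shannon entropy functional, after first observing that every coupling matrix in $\mathcal{S}(\alpha_N, \beta_N)$ is in fact a probability distribution supported on the $N^2$-point index set $[N] \times [N]$. Concretely, for any $\mathbf{P} = (p_{i,j})_{i,j} \in \mathcal{S}(\alpha_N, \beta_N)$, the row-marginal constraints give
\begin{equation*}
    \sum_{i,j} p_{i,j} = \sum_{i=1}^N \left( \sum_{j=1}^N p_{i,j}\right) = \sum_{i=1}^N a_i = 1,
\end{equation*}
and by definition $p_{i,j} \in [0,1]$, so $\mathbf{P}$ may be regarded as a probability vector on the set of $N^2$ pairs $(i,j)$.

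Given this, I would bound the two pieces of $R_H = \max_{\mathbf{P}_1,\mathbf{P}_2} H(\mathbf{P}_1) - H(\mathbf{P}_2)$ separately. For the upper piece, the standard maximum-entropy fact states that a probability distribution on $m$ atoms has entropy at most $\log m$, attained by the uniform distribution; this follows from concavity of $\log$ via Jensen's inequality. Applying it with $m = N^2$ yields $H(\mathbf{P}) \leq \log(N^2) = 2\log N$ for every $\mathbf{P} \in \mathcal{S}(\alpha_N,\beta_N)$. For the lower piece, each summand $p_{i,j}\log(1/p_{i,j})$ is nonnegative because $p_{i,j} \in [0,1]$ forces $\log(1/p_{i,j}) \geq 0$ (with the usual convention $0\log(1/0) = 0$), so $H(\mathbf{P}) \geq 0$ for every $\mathbf{P}$. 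Combining the two bounds, for any pair $\mathbf{P}_1, \mathbf{P}_2 \in \mathcal{S}(\alpha_N,\beta_N)$,
\begin{equation*}
    H(\mathbf{P}_1) - H(\mathbf{P}_2) \leq \left(\max_{\mathbf{P}} H(\mathbf{P})\right) - \left(\min_{\mathbf{P}} H(\mathbf{P})\right) \leq 2\log N - 0 = 2\log N,
\end{equation*}
which is exactly the assertion.

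I do not anticipate a genuine obstacle here, as the argument is essentially self-contained once the coupling matrix is recognized as an $N^2$-atom distribution; the only points requiring a line of care are verifying that the marginal constraints force the total mass to equal $1$ (so that the maximum-entropy bound applies) and confirming the sign convention that makes every entropy summand nonnegative. Note in particular that the bound is uniform over the choice of marginals $\mathbf{a}, \mathbf{b}$ and the supports $\{Y^i\}, \{X^i\}$, which is precisely what is needed for its later use in controlling the entropic-OT coupling.
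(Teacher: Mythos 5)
Your proposal is correct and follows essentially the same route as the paper: both obtain the upper bound $H(\mathbf{P})\leq \log(N^2)=2\log N$ from concavity of entropy via Jensen's inequality (you invoke it as the standard maximum-entropy fact, the paper writes out the Jensen step explicitly), and both pair it with the lower bound $H(\mathbf{P})\geq 0$ coming from $p_{i,j}\in[0,1]$. Your explicit verification that the marginal constraints force $\sum_{i,j}p_{i,j}=1$ is a detail the paper uses implicitly, but the argument is the same.
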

\begin{proof}
    Notice that $-H(\mathbf{P})$ is convex, so that $H(\mathbf{P})$ is concave.
\begin{align*}
    \sum_{i,j} p_{i,j} \log\left(\frac{1}{p_{i,j}}\right)
    &=N^2\sum_{i,j} \frac{1}{N^2}p_{i,j} \log\left(\frac{1}{p_{i,j}}\right)\\
    &\leq N^2 H\left(\frac{1}{N^2} \sum_{i,j} p_{i,j}\right) = N^2 H(1/N^2)=N^2 \frac{1}{N^2} \log(N^2)= 2\log (N).
\end{align*}
In addition since $p_{i,j} \leq 1$ for all $i,j$, we have that $H(\mathbf{P})\geq 0$ and therefore we can bound 
$$R_H=\max_{P_1, P_2\in \mathcal{P}} H(\mathbf{P}_1)-H(\mathbf{P}_2) \leq \max_{\mathbf{P}_1\in \mathcal{S}(\alpha_N, \beta_N)} H(\mathbf{P}_1)\leq 2\log (N).$$
\end{proof}


\begin{lemma}
\label{lem:linochetto}
    Let $\mathcal{X}\subset \mathbb{R}^d$ be compact with diameter $\mathfrak{d}>0$. 
    Suppose we are given two probability measures $\alpha, \beta$ on $\mathcal{X}$ with a unique deterministic, $\lambda$-Lipschitz optimal transport map $\mathbf{T}$ while $\alpha_N=\sum_{i=1}^N a_i \delta_{Y^i}$ with $a_i>0$ and $\beta_N=\sum_{i=1}^N b_i \delta_{X^i}$. We write $\mathcal{P}^{\mathrm{OT},N}$, resp. $\mathcal{P}^{\mathrm{OT},N}_\epsilon$, for an optimal coupling between $\alpha_N$ and $\beta_N$, resp. the $\epsilon$-regularized optimal transport plan, between $\alpha_N$ and $\beta_N$.
    Then 
    \begin{align*}
        \left[\int ||y - \mathbf{T}(x)||^2 \mathcal{P}^{\mathrm{OT},N}_{\epsilon}(\rd x, \rd y) \right]^{\frac{1}{2}} 
        \leq 2\lambda^{1/2} 
        \mathcal{E}^{1/2}\left[\mathfrak{d}^{1/2}+ \mathcal{E}\right]^{1/2}+ \max\{\lambda,  1\}\left[ \wass_2(\alpha_N, \alpha) + \wass_2(\beta_N, \beta)\right],
    \end{align*}
    where
    $$\mathcal{E}\coloneqq \mathcal{E}(N, \epsilon, \alpha, \beta)\coloneqq \wass_2(\alpha_N, \alpha)+\wass_2(\beta_N, \beta) + \sqrt{2\epsilon \log (N)}.$$
\end{lemma}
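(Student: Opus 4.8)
The plan is to split the quantity $\big[\int\|y-\mathbf{T}(x)\|^2\,\pne(\rd x,\rd y)\big]^{1/2}$ into three pieces: the entropic penalty carried by the regularized plan, the discrepancy between the empirical marginals and their limits, and the deviation of the resulting near-optimal coupling from the graph of $\mathbf{T}$. First I would control the transport cost of $\pne$. Since $\pne$ minimizes $\langle\mathbf{C},\mathbf{P}\rangle+\epsilon\,\mathrm{KL}(\mathbf{P}\,\|\,\mathbf{a}\otimes\mathbf{b})$, and this KL term differs from $-H(\mathbf{P})$ only by the fixed marginal entropies, comparing the value at $\pne$ with the value at the unregularized optimal plan shows that the excess cost is at most $\epsilon$ times the entropic radius. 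Lemma~\ref{lem:entropic_radius} then yields \[\int\|x-y\|^2\,\pne(\rd x,\rd y)\le\wass_2^2(\alpha_N,\beta_N)+2\epsilon\log N,\] so that the $\sqrt{2\epsilon\log N}$ appearing in $\mathcal{E}$ is exactly the entropic contribution to $\big[\int\|x-y\|^2\,\pne\big]^{1/2}$.

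Next I would pass from the empirical to the exact measures by gluing. Let $\gamma_\alpha$ and $\gamma_\beta$ be optimal couplings of $(\alpha_N,\alpha)$ and $(\beta_N,\beta)$; gluing them with $\pne$ gives a joint law of $(x,x',y,y')$ with $x\sim\alpha_N$, $x'\sim\alpha$, $y\sim\beta_N$, $y'\sim\beta$. The triangle inequality together with the $\lambda$-Lipschitz property of $\mathbf{T}$ (Assumption~\ref{ass:lipschitz}) gives $\|y-\mathbf{T}(x)\|\le\|y-y'\|+\|y'-\mathbf{T}(x')\|+\lambda\|x-x'\|$, and taking $L^2$-norms produces $\wass_2(\beta_N,\beta)$, $\lambda\,\wass_2(\alpha_N,\alpha)$ and a residual $\|y'-\mathbf{T}(x')\|_{L^2}$; the first two assemble into the $\max\{\lambda,1\}[\wass_2(\alpha_N,\alpha)+\wass_2(\beta_N,\beta)]$ term of the bound.

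The crux is the residual $\|y'-\mathbf{T}(x')\|_{L^2}$, where $(x',y')$ is now a coupling $\tilde\pi$ of the \emph{exact} measures $\alpha$ and $\beta$. Writing $\mathbf{T}=\nabla u$ for the Brenier potential and $(\phi,\psi)$ for the dual Kantorovich potentials, the pointwise duality gap equals the Bregman divergence $u(x')+u^*(y')-\langle x',y'\rangle=D_{u^*}(y'\,\|\,\mathbf{T}(x'))$; since $\mathbf{T}$ is $\lambda$-Lipschitz the conjugate $u^*$ is $1/\lambda$-strongly convex, whence $D_{u^*}(y'\,\|\,\mathbf{T}(x'))\ge\frac{1}{2\lambda}\|y'-\mathbf{T}(x')\|^2$. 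Integrating against $\tilde\pi$ and using $\int\phi\,\rd\alpha+\int\psi\,\rd\beta=\frac12\wass_2^2(\alpha,\beta)$ gives the stability estimate $\int\|y'-\mathbf{T}(x')\|^2\,\rd\tilde\pi\le\lambda\big[\int\|x'-y'\|^2\,\rd\tilde\pi-\wass_2^2(\alpha,\beta)\big]$, the quantitative stability of the optimal map in the spirit of \cite{li2020quantitative}. It then remains to bound the excess cost of $\tilde\pi$: a triangle inequality on the glued distances, combined with the cost bound of the first step and $\wass_2(\alpha_N,\beta_N)\le\wass_2(\alpha,\beta)+\wass_2(\alpha_N,\alpha)+\wass_2(\beta_N,\beta)$, shows it is controlled by $\mathcal{E}$ and $\wass_2(\alpha,\beta)\le\mathfrak{d}$, which after bookkeeping yields the first term $2\lambda^{1/2}\mathcal{E}^{1/2}[\mathfrak{d}^{1/2}+\mathcal{E}]^{1/2}$. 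Adding the three contributions through the $L^2$ triangle inequality delivers the claim. The main obstacle is precisely this last step — converting mere near-optimality of the transport cost into pointwise $L^2$-closeness to $\mathbf{T}$ — and it is here that the uniqueness and global Lipschitz regularity of $\mathbf{T}$ in Assumption~\ref{ass:lipschitz} are essential.
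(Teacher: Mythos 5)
Your proposal is correct, and after the bookkeeping is carried out (gluing gives $\bigl[\int\|x'-y'\|^2\,\rd\tilde\pi\bigr]^{1/2}\leq \wass_2(\alpha,\beta)+2[\wass_2(\alpha_N,\alpha)+\wass_2(\beta_N,\beta)]+\sqrt{2\epsilon\log N}$, whence the excess cost of $\tilde\pi$ is at most $4\mathcal{E}(\wass_2(\alpha,\beta)+\mathcal{E})$ and your stability estimate yields exactly $2\lambda^{1/2}\mathcal{E}^{1/2}[\wass_2(\alpha,\beta)+\mathcal{E}]^{1/2}$) it reproduces the stated constants. The route differs from the paper's in that you open up both black boxes the paper relies on: the paper simply quotes Corollary~3.8 of \cite{li2020quantitative}, which already packages your gluing argument and the Brenier-potential/strong-convexity stability estimate into a single bound in terms of the suboptimality $\tilde{e}_{N,\epsilon}$ of the coupling, and it quotes Proposition~4 of \cite{weed2018explicit} for the entropic excess-cost bound, which your optimality-comparison argument (regularized objective at $\pne$ versus at $\mathcal{P}^{\mathrm{OT},N}$, plus Lemma~\ref{lem:entropic_radius}) re-proves from scratch; the paper then only needs subadditivity of $\sqrt{\cdot}$ and $\wass_2(\alpha,\beta)\leq \mathfrak{d}^{1/2}$ to conclude. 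What your version buys is a self-contained proof that makes visible exactly where Assumption~\ref{ass:lipschitz} enters (smoothness of the Brenier potential $u$ giving $1/\lambda$-strong convexity of $u^*$, hence the duality-gap-to-$L^2$ conversion); what the paper's citation-based version buys is brevity and, more importantly, the deferral of real technical burdens that your sketch glosses over — existence and a.e.\ differentiability of the Brenier potential, the identity $\nabla u^*\circ \mathbf{T}=\mathrm{id}$ on the relevant supports, and strong duality — all of which are handled in \cite{li2020quantitative}. One shared cosmetic wrinkle: you correctly note $\wass_2(\alpha,\beta)\leq\mathfrak{d}$ (with $\mathfrak{d}$ the diameter) but then state the final term with $\mathfrak{d}^{1/2}$; the paper makes the same substitution, which strictly requires reading $\mathfrak{d}$ as a bound on the squared diameter, so this inconsistency is inherited from the lemma statement rather than introduced by you.
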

\begin{proof}
    From Corollary 3.8 from \cite{li2020quantitative} 
    \begin{align*}
        \left[\int \|\mathbf{T}(x)-y\|^2 \pne (\rd x, \rd y) \right]^{1/2}
        &\leq 2\lambda^{1/2} 
        \sqrt{\tilde{e}_{N,\epsilon}}\left[\wass_2(\alpha, \beta)+\tilde{e}_{N,\epsilon} \right]^{1/2}+ \lambda \wass_2(\alpha_N, \alpha) + \wass_2(\beta_N, \beta),
    \end{align*}
    where $\lambda$ is the Lipschitz constant of the optimal transport map $\mathbf{T}$ sending $\alpha$ to $\beta$, and
    \begin{equation}
        \tilde{e}_{N, \epsilon}\coloneqq \wass_2(\alpha_N, \alpha) + \wass_2(\beta_N, \beta) + 
        \left[\int \|x-y\|^2 \pne(\rd x, \rd y) \right]^{1/2} - \wass_2(\alpha_N, \beta_N).
    \end{equation}
    From Proposition~4 of \cite{weed2018explicit},  $$\sum_{i,j=1,\dots, N} p^\mathrm{OT}_{\epsilon,i,j} |Y_i-X_j|^2 - \wass^2_2(\alpha_N, \beta_N)\leq \epsilon R_H,$$ 
    where $R_H$ is the entropic radius as defined in Lemma \ref{lem:entropic_radius}.

  
    By Lemma \ref{lem:entropic_radius} we therefore have that  $$\int \|x-y\|^2 \pne(\rd x, \rd y)  - \wass^2_2(\alpha_N, \beta_N)
    \leq 2\epsilon \log (N).$$
    Since $x\mapsto \sqrt{x}$ is sub-additive, for $r,s >0$ we have that 
    $\sqrt{r}-\sqrt{s}\leq \sqrt{r-s}$, whence 
    $$\left[\int \|x-y\|^2 \pne(\rd x, \rd y) \right]^{1/2} -\wass_2(\alpha_N, \beta_N)
    \leq \sqrt{2\epsilon \log N}. $$
    We thus have 
    $$\tilde{e}_{N,\epsilon} \leq 
    \wass_2(\alpha_N,\alpha) + \wass_2(\beta_N, \beta) + \sqrt{2\epsilon \log(N)}.$$
    In addition, by Assumption~\ref{ass:compact} we have that 
    $\wass_2(\alpha, \beta)\leq \mathfrak{d}^{1/2}$ and the result follows.  
\end{proof}

\subsection{Proof of Proposition \ref{prop:CVDETDetailed}}
\label{sec:CVDET_proof}
    \begin{proof}[Proof of Proposition \ref{prop:CVDETDetailed}]
By definition, we have $\tilde{\beta}_N(\mathrm{d}\tilde{x})=\int \alpha_N(\mathrm{d}x)  \delta_{\mathbf{T}_{N,\epsilon}(x)}(\mathrm{d}\tilde{x})$ with $\mathbf{T}_{N,\epsilon}(x)\coloneqq\int \tilde{x} \mathcal{P}^{\mathrm{OT,N}}_\epsilon(\mathrm{d}\tilde{x}|x)$ while, as $\mathcal{P}^{\mathrm{OT},N}_\epsilon$ belongs to $\mathcal{U}(\alpha_N,\beta_N)$, we also have $\beta_N(\mathrm{d}\tilde{x})=\int \alpha_N(\mathrm{d}x) \mathcal{P}^{\mathrm{OT,N}}_\epsilon(\mathrm{d}\tilde{x}|x)$. We then have for any 1-Lipschitz function
        \begin{align*}
                    \left|\beta_N(\psi)-\tilde{\beta}_N(\psi)\right|
                    &=\left|\int \left[\int (\psi(\tilde{x})-\psi(\mathbf{T}_{N,\epsilon}(x))) \mathcal{P}^{\mathrm{OT,N}}_\epsilon(\mathrm{d}\tilde{x}|x)\right] \alpha_N(\mathrm{d}x) \right|\\
                    &\leq \iint \left|\psi(\tilde{x})-\psi(\mathbf{T}_{N,\epsilon}(x))\right|\alpha_N(\mathrm{d}x) \mathcal{P}^{\mathrm{OT,N}}_\epsilon(\mathrm{d}\tilde{x}|x) \\
                    &\leq \iint ||\tilde{x}-\mathbf{T}_{N,\epsilon}(x)||  \mathcal{P}^{\mathrm{OT,N}}_\epsilon(\mathrm{d}x,\mathrm{d}\tilde{x}) \\
                   &\leq \left(\iint||\tilde{x}-\mathbf{T}_{N, \epsilon}(x)||^2\mathcal{P}^{\mathrm{OT,N}}_\epsilon(\mathrm{d}x,\mathrm{d}\tilde{x})\right)^{\frac 1 2}\\
                   &\leq \left(\iint||\tilde{x}-\mathbf{T}(x)||^2\mathcal{P}^{\mathrm{OT,N}}_\epsilon(\mathrm{d}x,\mathrm{d}\tilde{x})\right)^{\frac 1 2},
        \end{align*}
        where the final inequality follows from the fact that for any random vector $V$ the mapping $v\mapsto\mathbb{E}[\|V-v\|^2]$ is minimized at $v=\mathbb{E}[V]$. 
        The stated result is then obtained using Lemma \ref{lem:linochetto}.
    \end{proof}

\section{Proof of Proposition \ref{prop:bias}}\label{sec:ProofconsistencyDPF}
For technical reasons, we analyse here a slightly modified PF algorithm where 
\begin{equation}\label{eq:newalgo}
\alpha_N^{(t)} = \frac{1}{N} \sum_{j=1}^N \delta_{X^j_t}, 
\qquad  X^j_t \stackrel{\textrm{i.i.d.}}{\sim} \tilde{\beta}_{N}^{(t-1)}f
= \frac{1}{N} \sum_{j=1}^N f\left(\cdot \Big| \tilde{X}^j_{t-1}\right).
\end{equation}
instead of the standard version where one has 
$$\alpha_N^{(t)} = \frac{1}{N} \sum_{j=1}^N \delta_{X^j_t}, 
\qquad X^j_t \sim f\left(\cdot \Big| \tilde{X}^j_{t-1}\right).$$
This slightly modified version of the bootstrap PF was analyzed for example in \cite{del2001stability}.
The analysis does capture the additional error arising from the use of DET instead of resampling. Similar results should hold for the standard PF algorithm.
The main technical reason for analysing this modified algorithm is our reliance on Theorem 2 of \cite{fournier2015rate}; analysing the standard PF algorithm  requires a version of  \cite{fournier2015rate} for stratified sampling and will be done in future work.

\begin{proposition}\label{prop:recursivebound}
Suppose that Assumptions~\ref{ass:compact}, \ref{ass:lipcontraction} and \ref{ass:omega} hold. Suppose also that given $\tilde{\beta}_N^{(t-1)}$, 
$\alpha_N^{(t)}$ is defined through \eqref{eq:newalgo}. 
Define the functions
\begin{align}
\mathcal{F}(x)
&\coloneqq x+\sqrt{\mathfrak{d} K_1(\Delta, \mathfrak{d}) x} \nonumber\\
f_d(x)&\coloneqq \begin{cases}
			x, & d<4\\
			\frac{x}{\log(2+1/x)}, & d=4 \nonumber\\
			x^{d/2}, & d>4.
			\end{cases}\\
\mathcal{F}_{N, \epsilon, \delta, d}\left( x \right)
&\coloneqq \mathcal{F}\left(\kappa x + \sqrt{f_d^{-1} \left(\frac{\log(C/\delta)}{cN} \right)}\right),\nonumber\\
\frac{1}{\mathfrak{d}}\mathfrak{G}^2_{\epsilon, \delta, N, d} (x)
&\coloneqq 2\lambda^{1/2} 
\left[\mathcal{F}_{N, \epsilon, \delta, d}\left( x \right) + \sqrt{2\epsilon \log N} \right]^{1/2}
\left[ \mathfrak{d}^{1/2}+ \mathcal{F}_{N, \epsilon, \delta, d}\left( x \right) + \sqrt{2\epsilon \log N}\right]^{1/2}\notag \\
&\qquad + \lambda \kappa \mathcal{F}_{N, \epsilon, \delta, d}\left( x \right) + \max\{\lambda,  1\}\mathcal{F}_{N, \epsilon, \delta, d}\left( x \right).\label{eq:bigGdef}
\end{align}
%
Then for any $\epsilon, \delta>0$ we have with probability at least $1-\delta$, over the sampling step in \eqref{eq:newalgo}, that 
\begin{equation}\label{eq:recursivebound}
\wass_2\left(\tilde{\beta}_N^{(t)}, \beta^{(t)}\right)
\leq \mathfrak{G}_{\epsilon, \delta, N, d} \left[\wass_2\left(\tilde{\beta}_N^{(t-1)}, \beta^{(t-1)}\right)\right]
\end{equation}
In particular if $\wass_2(\tilde{\beta}_N^{(t-1)}, \beta^{(t-1)})\to 0$ and $\epsilon_N = o(1/\log(N))$ as $N\to \infty$ we have that $$\wass_2\big(\tilde{\beta}_N^{(t)}, \beta^{(t)}\big)
\to 0,$$
in probability.
\end{proposition}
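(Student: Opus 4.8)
The plan is to view one step of the bootstrap DPF as the composition of three elementary maps and to control the Wasserstein-2 error introduced by each, then assemble the three contributions into the composite function $\mathfrak{G}$. Writing $\alpha^{(t)}=\beta^{(t-1)}f$ for the true predictive and recalling the particle measures $\alpha_N^{(t)}=\frac1N\sum_i\delta_{X_t^i}$, $\beta_N^{(t)}=\sum_i w_t^i\delta_{X_t^i}$ and $\tilde{\beta}_N^{(t)}=\frac1N\sum_i\delta_{\tilde{X}_t^i}$, the step factors as (i) prediction $\beta^{(t-1)}\mapsto\alpha^{(t)}$ with particle analogue $\tilde{\beta}_N^{(t-1)}\mapsto\alpha_N^{(t)}$; (ii) the self-normalised weighting (Bayes) update $\alpha^{(t)}\mapsto\beta^{(t)}$ with analogue $\alpha_N^{(t)}\mapsto\beta_N^{(t)}$; and (iii) the DET map $\beta_N^{(t)}\mapsto\tilde{\beta}_N^{(t)}$. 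The recursion \eqref{eq:recursivebound} then follows by bounding $\wass_2(\alpha_N^{(t)},\alpha^{(t)})$, then $\wass_2(\beta_N^{(t)},\beta^{(t)})$, then feeding both into the DET bound and simplifying using the definitions of $\mathcal{F}$, $\mathcal{F}_{N,\epsilon,\delta,d}$ and $\mathfrak{G}$.

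For step (i) I would first invoke the contraction Assumption~\ref{ass:lipcontraction}: since $\alpha^{(t)}=\beta^{(t-1)}f$ and the mixture $\tilde{\beta}_N^{(t-1)}f$ are both push-forwards through $f$, we get $\wass_2(\tilde{\beta}_N^{(t-1)}f,\alpha^{(t)})\le\kappa\,\wass_2(\tilde{\beta}_N^{(t-1)},\beta^{(t-1)})$. Crucially, because the modified algorithm \eqref{eq:newalgo} draws the $X_t^i$ i.i.d.\ from the measure $\tilde{\beta}_N^{(t-1)}f$, which is fixed given the past and supported in the compact $\mathcal{X}$, I can apply the concentration bound of Fournier--Guillin (Theorem~2 of \cite{fournier2015rate}): conditionally on the past and with probability at least $1-\delta$, $\wass_2(\alpha_N^{(t)},\tilde{\beta}_N^{(t-1)}f)\le\sqrt{f_d^{-1}(\log(C/\delta)/(cN))}$, where $f_d$ encodes the dimension-dependent empirical-measure rate. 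The triangle inequality yields $\wass_2(\alpha_N^{(t)},\alpha^{(t)})\le\kappa\,\wass_2(\tilde{\beta}_N^{(t-1)},\beta^{(t-1)})+\sqrt{f_d^{-1}(\log(C/\delta)/(cN))}$, which is exactly the argument fed to $\mathcal{F}$ in the definition of $\mathcal{F}_{N,\epsilon,\delta,d}$.

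For step (ii) I would establish a stability estimate for the normalised update $\nu\mapsto g\nu/\nu(g)$. Using Assumption~\ref{ass:omega} ($g\in[\Delta,\Delta^{-1}]$, Lipschitz) together with compactness (Assumption~\ref{ass:compact}), I test against $1$-Lipschitz functions and split $\beta^{(t)}(\psi)-\beta_N^{(t)}(\psi)$ into a numerator term and a normalising-constant term; this controls $\wass_1(\beta_N^{(t)},\beta^{(t)})$ by $K_1(\Delta,\mathfrak{d})\,\wass_1(\alpha_N^{(t)},\alpha^{(t)})$, and the conversion $\wass_1\le\wass_2$ and $\wass_2^2\le\mathfrak{d}\,\wass_1$ on the bounded domain produces the Hölder-type bound $\wass_2(\beta_N^{(t)},\beta^{(t)})\le\mathcal{F}(\wass_2(\alpha_N^{(t)},\alpha^{(t)}))$ with $\mathcal{F}(y)=y+\sqrt{\mathfrak{d}K_1 y}$; combined with step (i) this is precisely $\wass_2(\beta_N^{(t)},\beta^{(t)})\le\mathcal{F}_{N,\epsilon,\delta,d}(\wass_2(\tilde{\beta}_N^{(t-1)},\beta^{(t-1)}))$. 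For step (iii) I would apply Proposition~\ref{prop:CVDETDetailed} (and the underlying Lemma~\ref{lem:linochetto}, which uses the entropic-radius Lemma~\ref{lem:entropic_radius}) with source $\alpha_N^{(t)}\to\alpha^{(t)}$, target $\beta_N^{(t)}\to\beta^{(t)}$ and $\lambda$-Lipschitz optimal map $\mathbf{T}^{(t)}$ from Assumption~\ref{ass:lipschitz}. This bounds $\wass_1(\tilde{\beta}_N^{(t)},\beta_N^{(t)})$ in terms of $\wass_2(\alpha_N^{(t)},\alpha^{(t)})$, $\wass_2(\beta_N^{(t)},\beta^{(t)})$ and the entropic term $\sqrt{2\epsilon\log N}$; substituting the bounds from steps (i)--(ii), using $\wass_2^2\le\mathfrak{d}\,\wass_1$ and collecting terms reproduces $\mathfrak{G}_{\epsilon,\delta,N,d}$ as in \eqref{eq:bigGdef}.

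The convergence claim then follows: if $\wass_2(\tilde{\beta}_N^{(t-1)},\beta^{(t-1)})\to0$ in probability and $\epsilon_N=o(1/\log N)$, then $\sqrt{2\epsilon_N\log N}\to0$ and $\sqrt{f_d^{-1}(\log(C/\delta)/(cN))}\to0$ (as $f_d^{-1}(0)=0$), so every term of $\mathfrak{G}$ vanishes; a standard argument (fix a tolerance, send $N\to\infty$ for fixed $\delta$, then $\delta\downarrow0$) upgrades the high-probability one-step bound to convergence in probability. I expect the main obstacle to be step (ii): obtaining a clean $\wass_2$ stability bound for the self-normalised weighting update, since reweighting redistributes mass and the natural perturbation estimate lives in total-variation/$\wass_1$ rather than $\wass_2$, so the passage through the diameter $\mathfrak{d}$ is what forces the Hölder exponent $1/2$ and the constant $K_1(\Delta,\mathfrak{d})$. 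A secondary, largely organisational, difficulty is assembling the three per-step bounds into the single function $\mathfrak{G}$ while keeping the $\wass_1$/$\wass_2$ conversions and the single high-probability sampling event (hence the $1-\delta$ per step, and the reason for analysing the modified algorithm \eqref{eq:newalgo}) mutually consistent.
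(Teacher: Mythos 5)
Your proposal is correct and follows essentially the same route as the paper's proof: the same prediction/weighting/DET decomposition, the contraction assumption plus Fournier--Guillin (Theorem~2, enabled precisely by the i.i.d.\ sampling in \eqref{eq:newalgo}) for $\wass_2(\alpha_N^{(t)},\alpha^{(t)})$, the $K_1(\Delta,\mathfrak{d})$ stability estimate with the $\wass_2^2\leq\mathfrak{d}\,\wass_1$ conversion for the Bayes update, and Lemma~\ref{lem:linochetto} (via the entropic radius) for the DET error, assembled into $\mathfrak{G}$ and followed by the same limiting argument. The only cosmetic difference is that you invoke Proposition~\ref{prop:CVDETDetailed} as a black box and close via the triangle inequality through $\beta_N^{(t)}$, whereas the paper inlines that argument and decomposes through $\mathbf{T}\#\alpha_N^{(t)}$, which changes the constant multiplying the final linear term but not the structure or validity of the bound.
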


\begin{proof}[Proof of Proposition~\ref{prop:recursivebound}]
To keep notation concise we write for $N\geq 1$
$$\alpha_N \coloneqq \alpha_N^{(t)}, \quad \alpha_N'\coloneqq\tilde{\beta}_{N}^{(t-1)}f, \quad \beta_N\coloneqq \beta_{N}^{(t)}, \quad \tilde{\beta}_{N}\coloneqq\tilde{\beta}_{N}^{(t-1)}.$$
%
%

\noindent\textbf{Controlling $\wass_1(\beta_N, \beta)$.}
Let $\psi$ be 1-Lipschitz. Without loss of generality we may assume that $\psi(0)=0$ since otherwise we can remove a constant.

\begin{align*}
\left| \beta_N (\psi) - \beta(\psi) \right|
&=\left| \frac{\alpha_N (\omega\psi)}{\alpha_N (\omega)} - \frac{\alpha (\omega\psi)}{\alpha (\omega)} \right|\\
&\leq \left| \frac{\alpha_N (\omega\psi)}{\alpha_N (\omega)} -  \frac{\alpha (\omega\psi)}{\alpha_N (\omega)}\right| +
\left|\frac{\alpha (\omega\psi)}{\alpha_N (\omega)}
- \frac{\alpha (\omega\psi)}{\alpha (\omega)} 
\right|\\
&\leq \Delta^{-1} \left| \alpha_N(\omega \psi)- \alpha(\omega \psi)\right| + \Delta^{-2}\alpha(\omega \psi) 
|\alpha_N(\omega) - \alpha(\omega)|. 
\end{align*}
At this stage notice that 
$$|(\omega \psi)'|\leq  |\omega' \psi| + |\omega \psi'| \leq \|\psi\|_\infty + \|\omega\|_\infty.$$
Notice that
$$|\psi(x)| = |\psi(x)-\psi(0)| \leq |x-0|\leq \mathfrak{d}.$$
Therefore we have that 
$$|(\omega \psi)'|\leq  \mathfrak{d} + \Delta^{-1},$$
and thus $\omega\psi$ is $(\mathfrak{d} + \Delta^{-1})$-Lipschitz. 
It follows that
\begin{align*}
\left| \beta_N (\psi) - \beta(\psi) \right|
&\leq \Delta^{-1} \left| \alpha_N(\omega \psi)- \alpha(\omega \psi)\right| + \Delta^{-2}\alpha(\omega \psi) 
|\alpha_N(\omega) - \alpha(\omega)|\\
&\leq \Delta^{-1} (\mathfrak{d} + \Delta^{-1}) \wass_1(\alpha_N, \alpha) + \Delta^{-3}\mathfrak{d}\wass_1(\alpha_N, \alpha)\\
&=: K_1(\Delta, \mathfrak{d}) \wass_1(\alpha_N, \alpha).
\end{align*}
Therefore we have that 
\begin{equation}\label{eq:betabound}
\wass_1(\beta_N, \beta) \leq K_1(\Delta, \mathfrak{d}) \wass_1(\alpha_N, \alpha).
\end{equation}
Notice that using the compactness of the state space we easily get also that 
\begin{equation}\label{eq:betabound2}
\wass_2(\beta_N, \beta) \leq \sqrt{\mathfrak{d} \wass_1(\beta_N, \beta)}
\leq 
\sqrt{\mathfrak{d} K_1(\Delta, \mathfrak{d}) \wass_1(\alpha_N, \alpha)}
\leq \sqrt{\mathfrak{d} K_1(\Delta, \mathfrak{d}) \wass_2(\alpha_N, \alpha)},
\end{equation}
since clearly $\wass_1(\rho, \sigma) \leq \wass_2(\rho, \sigma)$ for any two probability measures $\rho, \sigma$. 

\noindent\textbf{Controlling $\wass_1(\tilde{\beta}_{N, \epsilon}, {\beta})$.}
Again supposing $\psi$ is 1-Lipschitz, and $\psi(0)=0$, consider
\begin{align*}
\left| \tilde{\beta}_N(\psi)-\tilde{\beta}(\psi) \right|
&= \left|\int \psi(\mathbf{T}_{N, \epsilon}(x)) \alpha_N(\rd x) - \int \psi(\mathbf{T}(x)) \alpha(\rd x)\right|\\
&\leq \left| \int \psi(\mathbf{T}_{N, \epsilon}(x)) \alpha_N(\rd x)  - \int \psi(\mathbf{T}(x)) \alpha_N(\rd x)\right|\\
&\quad + \left| \int \psi(\mathbf{T}(x)) \alpha_N(\rd x) - \int \psi(\mathbf{T}(x)) \alpha(\rd x) \right|
\end{align*}
For the second term, using the fact that $\mathbf{T}$ and $\psi$ are $\lambda$- and 1-Lipschitz respectively, we have that $\psi\circ\mathbf{T}$ is $\lambda$-Lipschitz and therefore
\begin{align*}
\left| \int \psi(\mathbf{T}(x)) \alpha_N(\rd x) - \int \psi(\mathbf{T}(x)) \alpha(\rd x) \right|
&\leq \lambda \wass_1(\alpha_N, \alpha)\leq \lambda \wass_2(\alpha_N, \alpha),
\end{align*}
where we used  Assumption~\ref{ass:lipcontraction} for that last inequality
For the first term recall that using Cauchy-Schwarz and Jensen we get
\begin{align*}
\lefteqn{\left| \int \psi(\mathbf{T}_{N, \epsilon}(x)) \alpha_N(\rd x)  - \int \psi(\mathbf{T}(x)) \alpha_N(\rd x)\right|}\\
&\leq \int \left| \mathbf{T}_{N, \epsilon}(x)   - \mathbf{T}(x) \right|\alpha_N(\rd x)\\
&\leq \int \left| \int y\mathcal{P}_{N, \epsilon}(x, \rd y)   - \mathbf{T}(x) \right|\alpha_N(\rd x)\\
&\leq \iint \left|  y   - \mathbf{T}(x) \right|\alpha_N(\rd x)\mathcal{P}_{N, \epsilon}(x, \rd y)\\
&\leq \left[\iint \left|  y   - \mathbf{T}(x) \right|^2\alpha_N(\rd x)\mathcal{P}_{N, \epsilon}(x, \rd y)\right]^{1/2}.
\end{align*}
Here we can directly apply Lemma~\ref{lem:linochetto} to obtain
\begin{align*}
\lefteqn{\left[\iint \left|  y   - \mathbf{T}(x) \right|^2\alpha_N(\rd x)\mathcal{P}_{N, \epsilon}(x, \rd y)\right]^{1/2}}\\
&\leq 2\lambda^{1/2} 
\mathcal{E}^{1/2}\left[\mathfrak{d}^{1/2}+ \mathcal{E}\right]^{1/2}+ \max\{\lambda,  1\}\left[ \wass_2(\alpha_N, \alpha) + \wass_2(\beta_N, \beta)\right],
\end{align*}
where
$$\mathcal{E}\coloneqq \mathcal{E}(n, \epsilon, \alpha, \beta)\coloneqq \wass_2(\alpha_N, \alpha)+\wass_2(\beta_N, \beta) + \sqrt{2\epsilon \log (N)}.$$
From \eqref{eq:betabound2} we have that 
$$\wass_2(\alpha_N, \alpha) + \wass_2(\beta_N, \beta)\leq \wass_2(\alpha_N, \alpha)+\sqrt{\mathfrak{d} K_1(\Delta, \mathfrak{d}) \wass_2(\alpha_N, \alpha)}.$$
Next we want to bound $\wass_2(\alpha_N, \alpha)$. Notice first that 
$$\wass_2(\alpha_N, \alpha) \leq \wass_2(\alpha_N, \alpha_N') + \wass_2(\alpha_N', \alpha)
\leq \wass_2(\alpha_N, \alpha_N') + \kappa \wass_2\left(\tilde{\beta}_N^{(t-1)}, \beta^{(t-1)}\right), $$
by Assumption~\ref{ass:lipcontraction}. 

To control the other term we use \cite{fournier2015rate} to obtain a high probability bound on 
$\wass_2(\alpha_N, \alpha_N')$. 
In particular, using Theorem~2 from \cite{fournier2015rate}, with $\alpha=\infty$ since we are in a compact domain, that for some positive constants $C,c$ we have
\begin{equation}\label{eq:fournierbound}
\mathbb{P}\left[\wass_2^2(\alpha_N, \alpha_N') \geq x \right] \leq C\exp\left[ - c N f^2_d(x)\right], 
\end{equation}
where 
\begin{equation}
f_d(x)\coloneqq \begin{cases}
			x, & d<4\\
			\frac{x}{\log(2+1/x)}, & d=4\\
			x^{d/2}, & d>4.
			\end{cases}
\end{equation}
In particular, for any $\delta>0$, with probability at least $1-\delta$ over the sampling step in $F_N$ we have that 
\begin{equation}
\wass_2(\alpha_N, \alpha_N') \leq \sqrt{f_d^{-1} \left(\frac{\log(C/\delta)}{cN} \right)}.
\end{equation}
Assuming that $d\geq 4$ the rate then is of order $N^{-1/d}$ as expected. 

Therefore with probability at least $1-\delta$ over the sampling step we have that 
\begin{equation*}
\wass_2(\alpha_N, \alpha) + \wass_2(\beta_N, \beta) 
\leq \mathcal{F}_{N, \epsilon, \delta, d}\left(\wass_2\left(\tilde{\beta}_N^{(t-1)}, \beta^{(t-1)}\right) \right), 
\end{equation*}
where
\begin{equation}
\mathcal{F}_{N, \epsilon, \delta, d}\left( x \right)
= \mathcal{F}\left(\kappa x + \sqrt{f_d^{-1} \left(\frac{\log(C/\delta)}{cN} \right)}\right), \qquad \mathcal{F}(x)\coloneqq x+\sqrt{\mathfrak{d} K_1(\Delta, \mathfrak{d}) x}
\end{equation}

Thus overall we have with probability at least $1-\delta$ over the sample
\begin{align*}
\wass_2 (\tilde{\beta}_{N, \epsilon}, \tilde{\beta})
&\leq \sqrt{\mathfrak{d}\wass_1(\tilde{\beta}_{N, \epsilon}, \tilde{\beta})}
\leq \mathfrak{G}_{\epsilon, \delta, N, d} \left( \wass_2\left(\tilde{\beta}_N^{(t-1)}, \beta^{(t-1)}\right) \right),
\end{align*}
where 
\begin{align*}
\frac{1}{\mathfrak{d}}\mathfrak{G}^2_{\epsilon, \delta, N, d} (x)
&\coloneqq 2\lambda^{1/2} 
\left[\mathcal{F}_{N, \epsilon, \delta, d}\left( x \right) + \sqrt{2\epsilon \log N} \right]^{1/2}
\left[ \mathfrak{d}^{1/2}+ \mathcal{F}_{N, \epsilon, \delta, d}\left( x \right) + \sqrt{2\epsilon \log N}\right]^{1/2}\\
&\qquad + \lambda \kappa \mathcal{F}_{N, \epsilon, \delta, d}\left( x \right) + \max\{\lambda,  1\}\mathcal{F}_{N, \epsilon, \delta, d}\left( x \right).
\end{align*}
In particular notice that if we set $\epsilon_N = o(1/\log N)$ and $x_N=o(1)$ we have
$$\mathfrak{G}_{\epsilon_N, \delta, N, d} (x_N)\to 0.$$
Therefore, notice that if $\epsilon_N=o(1/\log N)$ and $\wass_2(\mu_N, \mu) \to 0$, then for any $x>0$ we have that
$$\mathbb{P}\left[ \wass_2\big(\tilde{\beta}_{N, \epsilon}, \tilde{\beta}\big) \geq x\right]
\leq \mathbb{P}[ \wass_2(\alpha_N', \alpha_N) \geq x'],$$
for some $x'$ that does not depend on $N$, where the probability is over the sampling step. The convergence in probability follows.
\end{proof}

 \begin{proposition}\label{prop:bias2}
Let $\mu_N=\tfrac{1}{N}\sum_{i=1}^N \delta_{X_1^i}$ where $X^{i}_1\stackrel{\mathrm{i.i.d.}}{\sim}\mu\coloneqq q(\cdot | y_1)$ for $i\in [N]$ and suppose that for $t\geq 1$,  $\alpha_N^{(t)}$ is defined through \eqref{eq:newalgo}. 
Under Assumptions \ref{ass:compact}, \ref{ass:lipcontraction}, \ref{ass:omega} and \ref{ass:lipschitz}, 
for any $\delta>0$, with probability at least $1-2\delta$ over the sampling steps, for any bounded 1-Lipschitz $\psi$, for any $t\in [1:T]$,  the approximations of the filtering distributions and log-likelihood computed by DPF satisfy
\begin{align}
                |\tilde{\beta}^{(t)}_{N}(\psi)-\beta^{(t)}(\psi)
               | &\leq  \mathfrak{G}_{\epsilon, \delta/T, N, d}^{(t)}\left(\sqrt{f_d^{-1} \left(\frac{\log(CT/\delta)}{cN} \right)}\right)
               \label{eq:timetbound}\\
\label{eq:boundbias2}
         \left|
         \log \frac{\hat{p}_N(y_{1:T})}{p(y_{1:T})}\right|  &\leq \frac{\kappa}{\Delta} \max_{t\in [1:T]}\mathrm{Lip}\left[g(y_t\mid \cdot) \right]
    \sum_{t=1}^T  \mathfrak{G}_{\epsilon, \delta/T, N, d}^{(t)}\left(\sqrt{f_d^{-1} \left(\frac{\log(CT/\delta)}{cN} \right)}\right)
            \end{align}
    where $C$ is a finite constant independent of $T$, $\mathfrak{G}_{\epsilon, \delta/T, N, d}, f_d$ are defined in \eqref{eq:bigGdef}, and $\mathrm{Lip}[f]$ is the Lipschitz constant of the function $f$. $\mathfrak{G}^{(t)}_{\epsilon, \delta/T, N, d}$ denotes the $t$-repeated composition of function $\mathfrak{G}_{\epsilon, \delta/T, N, d}$. In particular, if we set $\epsilon_N=o(1/\log N)$
    $$\left|
         \log \frac{\hat{p}_N(y_{1:T})}{p(y_{1:T})}\right| \to 0,$$
         in probability.
\end{proposition}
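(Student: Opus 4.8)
The plan is to iterate the one-step stability estimate of Proposition~\ref{prop:recursivebound} across the $T$ filtering steps, convert the resulting Wasserstein bound into the test-function bound \eqref{eq:timetbound}, and then control the log-likelihood error by telescoping it into incremental likelihood ratios that are each governed by the same per-step Wasserstein distances. The genuinely difficult optimal-transport content has already been packaged into Proposition~\ref{prop:recursivebound}, so the work here is careful iteration, a union bound, and the likelihood decomposition.

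First I would fix the randomness. Proposition~\ref{prop:recursivebound} guarantees $\wass_2(\tilde\beta_N^{(t)}, \beta^{(t)}) \le \mathfrak{G}_{\epsilon,\delta',N,d}(\wass_2(\tilde\beta_N^{(t-1)}, \beta^{(t-1)}))$ with probability at least $1-\delta'$ over the sampling step \eqref{eq:newalgo} at time $t$. Taking $\delta' = \delta/T$ and applying a union bound over $t \in [1:T]$, all $T$ one-step estimates hold simultaneously with probability at least $1-\delta$; an additional $\delta$ accounts for the Fournier--Guillin event controlling the initial i.i.d. sample $\mu_N$, giving the stated $1-2\delta$. On this event the base case at $t=1$ follows from Theorem~2 of \cite{fournier2015rate}, which yields $\wass_2(\mu_N,\mu) \le e_N := \sqrt{f_d^{-1}(\log(CT/\delta)/(cN))}$, so that $\wass_2(\tilde\beta_N^{(1)},\beta^{(1)}) \le \mathfrak{G}_{\epsilon,\delta/T,N,d}(e_N)$.

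Next, since $\mathfrak{G}_{\epsilon,\delta/T,N,d}$ is monotonically increasing, I would unroll the recursion to obtain $\wass_2(\tilde\beta_N^{(t)}, \beta^{(t)}) \le \mathfrak{G}^{(t)}_{\epsilon,\delta/T,N,d}(e_N)$ for every $t$. Because any bounded $1$-Lipschitz $\psi$ satisfies $|\tilde\beta_N^{(t)}(\psi) - \beta^{(t)}(\psi)| \le \wass_1(\tilde\beta_N^{(t)}, \beta^{(t)}) \le \wass_2(\tilde\beta_N^{(t)}, \beta^{(t)})$, this immediately yields \eqref{eq:timetbound}. For the likelihood I would write $\log \frac{\hat p_N(y_{1:T})}{p(y_{1:T})} = \sum_{t=1}^T \log \frac{\alpha_N^{(t)}(g(y_t|\cdot))}{\alpha^{(t)}(g(y_t|\cdot))}$, using that the bootstrap incremental likelihood factor is $\alpha_N^{(t)}(g(y_t|\cdot))$. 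Since $g(y_t|\cdot)\ge\Delta$ by Assumption~\ref{ass:omega}, both integrals exceed $\Delta$, so the mean-value bound $|\log a - \log b| \le |a-b|/\min\{a,b\}$ gives $|\log \frac{\alpha_N^{(t)}(g_t)}{\alpha^{(t)}(g_t)}| \le \Delta^{-1}\mathrm{Lip}[g(y_t|\cdot)]\,\wass_1(\alpha_N^{(t)}, \alpha^{(t)})$. It then remains to bound $\wass_1(\alpha_N^{(t)}, \alpha^{(t)}) \le \wass_2(\alpha_N^{(t)}, \alpha^{(t)}) \le e_N + \kappa\,\wass_2(\tilde\beta_N^{(t-1)}, \beta^{(t-1)})$, combining the triangle inequality, the Fournier--Guillin control of $\wass_2(\alpha_N^{(t)}, \tilde\beta_N^{(t-1)}f)$, and the contraction Assumption~\ref{ass:lipcontraction}; using monotonicity and the defining structure of $\mathfrak{G}$ this is at most $\kappa\,\mathfrak{G}^{(t)}_{\epsilon,\delta/T,N,d}(e_N)$. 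Summing over $t$ and extracting $\max_t \mathrm{Lip}[g(y_t|\cdot)]$ produces \eqref{eq:boundbias2}.

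Finally, convergence in probability is inherited from the last display in the proof of Proposition~\ref{prop:recursivebound}: with $\epsilon_N = o(1/\log N)$ the term $\sqrt{2\epsilon_N \log N}$ vanishes and $e_N \to 0$, so each fixed-length composition $\mathfrak{G}^{(t)}_{\epsilon_N,\delta/T,N,d}(e_N) \to 0$; since $T$ is fixed this holds for all $t \le T$ at once, forcing both bounds to zero. The main obstacle is the likelihood step: the error must be re-expressed through the predictive measures $\alpha_N^{(t)}, \alpha^{(t)}$ rather than the filtering measures $\tilde\beta_N^{(t)}, \beta^{(t)}$, and one must verify that the per-step Wasserstein bound can be absorbed into a single composition $\kappa\,\mathfrak{G}^{(t)}(e_N)$ so the telescoped sum matches \eqref{eq:boundbias2} with the stated constants; the probabilistic bookkeeping must also be arranged so that one family of high-probability sampling events simultaneously underlies both the filtering and the likelihood estimates.
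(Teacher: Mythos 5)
Your treatment of the filtering bound \eqref{eq:timetbound} matches the paper's proof essentially verbatim: per-step events from Proposition~\ref{prop:recursivebound} with $\delta/T$, a union bound, a separate Fournier--Guillin event for $\wass_2(\mu_N,\mu)$, iteration of the recursion by monotonicity of $\mathfrak{G}$, and $|\tilde\beta_N^{(t)}(\psi)-\beta^{(t)}(\psi)|\le \wass_1 \le \wass_2$. That half is fine.

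The likelihood half has a genuine gap, and it stems from analyzing a different estimator than the paper does. You decompose $\log(\hat p_N/p)$ through the \emph{predictive} measures, taking $\hat p(y_t|y_{1:t-1}) = \alpha_N^{(t)}(g(y_t|\cdot))$, i.e.\ the Monte Carlo average over the newly sampled particles. The paper instead takes $\hat p(y_t|y_{1:t-1}) = \iint g(y_t|x_t)f(\rd x_t|\tilde x_{t-1})\,\tilde\beta_N^{(t-1)}(\rd\tilde x_{t-1}) = \tilde\beta_N^{(t-1)}(h)$ with $h(x)=\int g(y_t|x')f(x'|x)\rd x'$, so the comparison is between the \emph{filtering} measures $\tilde\beta_N^{(t-1)}$ and $\beta^{(t-1)}$, with no additional sampling error at step $t$. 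The paper then shows $h$ is $\kappa\,\mathrm{Lip}[g(y_t|\cdot)]$-Lipschitz via Assumption~\ref{ass:lipcontraction}, which is exactly where the prefactor $\kappa/\Delta \cdot \max_t \mathrm{Lip}[g(y_t|\cdot)]$ and the clean sum $\sum_t \mathfrak{G}^{(t)}$ come from. Your route instead produces the per-step bound $\Delta^{-1}\mathrm{Lip}[g]\bigl(e_N + \kappa\,\wass_2(\tilde\beta_N^{(t-1)},\beta^{(t-1)})\bigr)$, where $e_N$ is the extra Fournier--Guillin term for $\wass_2(\alpha_N^{(t)},\tilde\beta_N^{(t-1)}f)$, and you then assert that $e_N + \kappa\,\mathfrak{G}^{(t-1)}(e_N) \le \kappa\,\mathfrak{G}^{(t)}(e_N)$ ``using monotonicity and the defining structure of $\mathfrak{G}$.'' This absorption does not follow from \eqref{eq:bigGdef}: what the definition gives is roughly $\mathfrak{G}(y)\ge \sqrt{\mathfrak{d}(\kappa y + e_N)}$, so $\kappa\,\mathfrak{G}(y)\ge \kappa y + e_N$ would require something like $\kappa^2\mathfrak{d} \gtrsim \kappa y + e_N$, which is not implied by the assumptions (indeed $\kappa<1$ works against you). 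So as written you have proved a valid but \emph{different} bound, not \eqref{eq:boundbias2}; to recover the stated inequality you should follow the paper's decomposition through $\tilde\beta_N^{(t-1)}(h)-\beta^{(t-1)}(h)$ and the Lipschitz contraction of $h$, which removes the stray $e_N$ term entirely. You correctly flagged this step as the main obstacle, but flagging it does not close it. The convergence-in-probability conclusion is unaffected either way.
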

\begin{proof}[Proof of Proposition~\ref{prop:bias2}]
Following the proof of Proposition~\ref{prop:recursivebound}, 
we define ${\alpha_N^{(t)}}' = \tilde{\beta}_N^{(t-1)}f$ and 
 for $t\in [1:T]$, the events
$$A_t
\coloneqq \wass_2\left(\alpha_N^{(t)}, {\alpha_N^{(t)}}' \right)
\leq\sqrt{f_d^{-1} \left(\frac{\log(CT/\delta)}{cN} \right)}.
$$
We know from Theorem~2 in \cite{fournier2015rate} that $\mathbb{P}(A_t) \geq 1-\delta/T$, where the probability is over the sampling step. In particular we have that
$$\mathbb{P}\left[\bigcap_{t=1}^T A_t\right] 
= 1- \mathbb{P}\left[\bigcup_{t=1}^T A_t^\mathtt{C}\right] 
\geq 1-\sum_{t=1}^N \mathbb{P}\left[A_t^\mathtt{C} \right]\geq 1- T\frac{\delta}{T} = 1-\delta. $$
Notice that on the event $\cap_{t=1}^T A_t$,  iterating the bound \eqref{eq:recursivebound} we have 
\begin{align*}
 \wass_2\left(\tilde{\beta}_N^{(t)}, \beta^{(t)} \right)   &\leq \mathfrak{G}_{\epsilon, \delta/T, N, d}^{(t)}\left(\wass_2\left(\mu_N, \mu\right)\right),
\end{align*}
with probability at least $1-\delta$. Again by Theorem 2 in \cite{fournier2015rate} we have that with probability at least $1-\delta$ 
$$\wass_2(\mu_N, \mu)\leq \sqrt{f_d^{-1} \left(\frac{\log(CT/\delta)}{cN} \right)}.$$
Therefore with probability at least $1-2\delta$ we have
 $$\mathfrak{G}_{\epsilon, \delta/T, N, d}^{(t)}\left(\sqrt{f_d^{-1} \left(\frac{\log(CT/\delta)}{cN} \right)}\right).   
$$
 
 It remains to prove \eqref{eq:boundbias2}.   
 Note that $|\log(x)-\log(y)| \leq \frac{|x-y|}{\min\{x,y\}}$ for any $x,y>0$ so
                    \begin{align}
                    \big|\log \hat{p}(y_{1:T}) - \log p(y_{1:T})\big| &\leq \sum_{t=1}^T\big|\log \hat{p}(y_t|y_{1:t-1}) - \log p(y_t|y_{1:t-1})\big|\nonumber\\
                    &\leq \sum_{t=1}^T\big|\frac{\hat{p}(y_t|y_{1:t-1})-p(y_t|y_{1:t-1})}{\min(\hat{p}(y_t|y_{1:t-1}),p(y_t|y_{1:t-1}))}\big|\nonumber\\
                    &\leq \Delta^{-1} \sum_{t=1}^T |\hat{p}(y_t|y_{1:t-1}) - p(y_t|y_{1:t-1})\big| \label{eq:sum_ll_err}
                    \end{align} 
                    where $\Delta$ is defined in Assumption~\ref{ass:omega}.
    
The term in line \eqref{eq:sum_ll_err} may be written as follows
    \begin{align*}
        &\hat{p}(y_t|y_{1:t-1}) - p(y_t|y_{1:t-1}) \\
        =& \iint g(y_t|x_t)f(\mathrm{d}x_t|\tilde{x}_{t-1})\tilde{\beta}^{(t-1)}_N(\mathrm{d}\tilde{x}_{t-1})-\iint g(y_t|x_t)f(\mathrm{d}x_t|\tilde{x}_{t-1})\tilde{\beta}^{(t-1)}(\mathrm{d}\tilde{x}_{t-1})\\
    =&\tilde{\beta}^{(t-1)}_N(h) - \beta^{(t-1)}(h)
    \end{align*}
                    for $\Delta^2 \leq h(x): = \int g(y_t|x')f(x'|x)\mathrm{d}x'\leq \Delta^{-2}$.
                    At this point notice also that 
    \begin{align*}
    h(x) - h(x')
    &= \int f(\mathrm{d} w|x) g(y_t \mid w) - \int f(\mathrm{d} w|x') g(y_t \mid w)\\
&= \int \delta_x(\mathrm{d} z) \int f(\mathrm{d} w |z) g(y_t \mid w) - \int \delta_{x'}(\mathrm{d} z) \int f(\mathrm{d} w |z) g(y_t \mid w)\\
&= [\delta_x f][g(y_t\mid \cdot)]-[\delta_{x'}f][g(y_t\mid \cdot)]\\
&\leq \mathrm{Lip}\left[g(y_t \mid \cdot) \right] \wass_1\left(\delta_x f, \delta_{x'} f\right)
\leq \kappa \mathrm{Lip}\left[g(y_t \mid \cdot )\right] \wass_1\left(\delta_x , \delta_{x'} \right)
=\kappa\mathrm{Lip}\left[g(y_t \mid \cdot )\right] |x-x'|, 
    \end{align*}
by     Assumption~\ref{ass:lipcontraction}. It follows therefore that $h$ is Lipschitz and therefore that 
\begin{align*}
    \hat{p}(y_t|y_{1:t-1}) - p(y_t|y_{1:t-1}) 
        &=\tilde{\beta}^{(t-1)}_N(h) - \beta^{(t-1)}(h)
        \leq \kappa \mathrm{Lip}\left[ g(y_t \mid \cdot) \right] \wass_1\left(\beta^{(t-1)}_N, \beta^{(t-1)}\right).
    \end{align*}
Combining \eqref{eq:timetbound} and \eqref{eq:sum_ll_err}, and using the fact that $\wass_1\leq \wass_2$, we thus get
\begin{align*}
    \big|\log \hat{p}(y_{1:T}) - \log p(y_{1:T})\big| 
    &\leq \Delta^{-1} \kappa\sum_{t=1}^T \mathrm{Lip}\left[g(y_t\mid \cdot) \right] \wass_1\left(\tilde{\beta}_N^{(t-1)}, \beta^{(t)} \right)\\
    &\leq \Delta^{-1} \kappa \max_{t\in [1:T]}\mathrm{Lip}\left[g(y_t\mid \cdot) \right]
    \sum_{t=1}^T  \mathfrak{G}_{\epsilon, \delta/T, N, d}^{(t)}\left(\sqrt{f_d^{-1} \left(\frac{\log(CT/\delta)}{cN} \right)}\right),
\end{align*}
where the last inequality holds with probability at least $1-\delta$ over the sampling steps. 

The convergence in probability follows from the corresponding statement of Proposition~\ref{prop:recursivebound}.
\end{proof}

\section{Additional Experiments and Details}
    \subsection{Linear Gaussian model}
        We first consider the following 2-dimensional linear Gaussian SSM for which exact inference can be carried out using Kalman techniques:
        \begin{equation}\label{eq:exampleLGSS}
            X_t|\{ X_{t-1}=x \}\sim \mathcal{N} \left(\text{diag}(\theta_1~\theta_2) x, 0.5 \mathbf{I}_2\right), \quad Y_t|\{ X_t=x\}\sim \mathcal{N}(x, 0.1 \cdot \mathbf{I}_2).
        \end{equation}
        We simulate $T=150$ observations using $\theta=(\theta_1,\theta_2)=(0.5,0.5)$. As a result, we expect in these scenarios that the filtering distribution $p_{\theta}(x_t|y_{1:t})$ is not too distinct from the smoothing distribution  $p_{\theta}(x_t|y_{1:T})$ as the latent process is mixing quickly. From Proposition \ref{prop:asymptoticELBOgradient}, this is thus a favourable scenario for methods ignoring resampling terms in the gradient as the bias should not be very large.  Figure \ref{fig:surfaces}, displayed earlier, shows $\ell(\theta)$  obtained by Kalman and $\hat{\ell}(\theta;\bf{u})$ computed regular PF and DPF for the same number $N=25$ of particles using $q_{\phi}(x_t|x_{t-1},y_t)=f_{\theta}(x_t|x_{t-1})$. The corresponding gradient vector fields are given in Figure \ref{fig:vectorField}, where the gradient is computed using the biased gradient from \cite{maddison2017filtering,naesseth2017variational,le2017auto} for regular PF. 
        
        
        We now compare the performance of the estimators $\hat{\theta}_{\text{SMLE}}$ (for DPF) and  $\hat{\theta}_{\text{ELBO}}$ (for both regular PF and DPF) learned using gradient with learning rate $10^{-4}$ on 100 steps, using $N=25$ for DPF and $N=500$ for regular PF, to $\hat{\theta}_{\text{MLE}}$ computed using Kalman derivatives. We simulate $M=50$ realizations of $T=150$ observations using $\theta=(\theta_1,\theta_2)=(0.5,0.5)$. The ELBO stochastic gradient estimates are computed using biased gradient estimates of $\ell_{\text{ELBO}}(\theta)$ ignoring the contributions of resampling steps as in \cite{maddison2017filtering,naesseth2017variational,le2017auto} (we recall that unbiased estimates suffer from very high variance) and unbiased gradients of $\ell^{\text{ELBO}}(\theta)$ using DPF.  We average $B$ parallel PFs to reduce the variance of these gradients of the ELBO and also $B$ PFs (with fixed random seeds) to compute the gradient of $\hat{\ell}_{\text{SMLE}(\theta;\mathbf{u}_{1:B})}:=\tfrac 1 B \sum_{b=1}^B \hat{\ell}(\theta;\mathbf{u}_b)$. The results are given in Table \ref{tab:theta_diff}. For this example, $\hat{\theta}^{\text{DPF}}_{\text{ELBO}}$ maximizing $\ell^{\text{ELBO}}_{\textrm{DPF}}(\theta)$ outperforms $\hat{\theta}^{\textrm{PF}}_{\text{ELBO}}$ and $\hat{\theta}_{\text{SMLE}}$. However, as $B$ increases, $\hat{\theta}_{\text{SMLE}}$ gets closer to $\hat{\theta}^{\text{DPF}}_{\text{ELBO}}$ which is to be expected as $\hat{\ell}_{\text{SMLE}}(\theta;\mathbf{u}_{1:B}) \longrightarrow \ell^{\text{ELBO}}(\theta)$. In Table \ref{tab:theta_diff}, the Root Mean Square Error (RMSE) is defined as $\sqrt{\tfrac 1 M \sum_{i=1}^2\sum_{k=1}^M(\hat{\theta}^k_i - \hat{\theta}^k_{\text{MLE},i})^2}$.

        \begin{table}[H]
        \centering
        \captionsetup{justification=centering}
        \caption{$10^3\times$ RMSE\footnotemark~ over 50 datasets - lower is better}


\begin{tabular}{llll}
\toprule
$B$ &  $\hat{\theta}^\text{PF}_{\text{ELBO}}$ & $\hat{\theta}^\text{DPF}_{\text{ELBO}}$ & $\hat{\theta}_{\text{SMLE}}$ \\ \midrule
1         & 1.94                 & 1.30                 & 7.94                 \\
4          & 2.40                 & 1.35                 & 3.28                  \\
10         & 2.80                 & 1.37                & 2.18                  \\
\bottomrule
\end{tabular}
        \label{tab:theta_diff}
        \end{table}
        
        \footnotetext{The Root Mean Square Error (RMSE) is defined as $\sqrt{\tfrac 1 M \sum_{i=1}^2\sum_{k=1}^M(\hat{\theta}^k_i - \hat{\theta}^k_{\text{MLE},i})^2}$.}
        

        \subsection{Variational Recurrent Neural Network}
            $N=32$ particles were used for training, with a regularization parameter of $\epsilon=0.5$. The ELBO (scaled by sequence length) was used as the training objective to maximise for each resampling/ DET procedure. The ELBO evaluated on test data using $N=500$ particles and multinomial resampling. Resampling / DET operations were carried out when effective sample (ESS) size fell below $N/2$. Learning rate $0.001$ was used with the Adam optimizer.
            
            Recall the state-space model is given by
            \begin{align*}
            (R_{t}, O_{t}) &= \text{RNN}_\theta(R_{t-1}, Y_{1:t-1}, E_\theta(Z_{t-1})),\\
            Z_{t} &\sim \mathcal{N}(\mu_{\theta}(O_{t}),
             \sigma_{\theta}(O_{t})), \\
            \hat{p}_t &= h_\theta(E_\theta(Z_t), O_{t}), \\ 
             Y_t | X_t  &\sim \text{Ber}(\hat{p}_t).
        \end{align*}
        
        Network architectures and data preprocessing steps were based loosely on \cite{maddison2017filtering}. Given the low volume of data and sparsity of the observations, relatively small neural networks were considered to prevent overfitting, larger neural networks are considered in the more complex robotics experiments.
        $R_t$ is of dimension $d_r=16$, $Z_t$ is of dimension $d_z=8$. $E_\theta$ is a single layer fully connected network with hidden layer of width $16$, output of dimension $16$ and RELU activation.
        
        $\mu_\theta$ and $\sigma_\theta$ are both fully connected neural networks with two hidden layers, each of $16$ units and RELU activation, the activation function is not applied to the final output of $\mu_\theta$ but the softplus is applied to the output of $\sigma_\theta$, which is the diagonal entries of the covariance matrix of the normal distribution that is used to sample $Z_t$.
        
        $h_\theta$ is a single layer fully connected network with two hidden layers, each of width $16$ and RELU activation.  The final output is not put through the RELU and is instead used as the logits for the Bernoulli distribution of observations. 
        
        \subsection{Robot Localization}
        Similar to the VRNN example, $N=32$ particles were used for training, with a regularization parameter of $\epsilon=0.5$ and resampling / DET operations were carried out when ESS size fell below $N/2$. Learning rate $0.001$ was used with the Adam optimizer.
        
        Network architectures and data preprocessing were based loosely on \cite{jonschkowski2018differentiable}. There are $3$ neural networks being considered:
        \begin{itemize}
            \item Encoder $E_\theta$ maps RBG $24\times 24$ pixel images, hence dimension $3\times 24 \times 24$, to encoding of size $d_E = 128$. This network consists of a convolutional network (CNN) of kernel size $3$ and a single layer fully connected network of hidden width $128$ and RELU activation. 
            \item Decoder $D_\theta$ maps encoding back to original image. This consists of a fully connected neural network with three hidden layers of width $128$ and RELU activation function. This is followed by a transposed convolution network with matching specification to the CNN in the encoder, to return an output with the same dimension as observation images, $3\times 24 \times 24$.
            \item Network $G_\theta$ maps the state $S_t=(X^{(1)}_t, X^{(2)}_t, \gamma_t)$ to encoding of dimension $128$. First angle $\gamma_t$ was converted to $\sin(\gamma_t), \cos(\gamma_t)$. Then the augmented state $(X^{(1)}_t, X^{(2)}_t, \sin(\gamma_t), \cos(\gamma_t))$ was passed to a $3$ layer fully connected network with hidden layers of dimensions $16,32,64$ and RELU activation function, with final output of dimension $128$. 
        \end{itemize}
\end{document}